\documentclass[letterpaper]{article}
\usepackage{aaai}
\usepackage{times}
\usepackage{helvet}
\usepackage{courier}
\setlength{\pdfpagewidth}{8.5in} 
\setlength{\pdfpageheight}{11in}

\usepackage{epsfig}
\usepackage{graphicx}
\usepackage{color}
\usepackage{amsmath}
\usepackage{amsthm}
\usepackage{subfigure}
\usepackage{multirow}
\usepackage{algorithm2e}

% Include other packages here, before hyperref.
\usepackage[utf8]{inputenc} % allow utf-8 input
\usepackage{url}            % simple URL typesetting
\usepackage{booktabs}       % professional-quality tables
\usepackage{amsfonts}       % blackboard math symbols
\usepackage{nicefrac}       % compact symbols for 1/2, etc.
\usepackage{microtype}      % microtypography

%%%%%%%%%%%%%%%%%%%%%%%%%%%%%%
% IMPORTANT -- ADDITION OF A PDF MARK WITH YOUR PAPER TITLE
% AND ALL AUTHOR NAMES IS REQUIRED ON ALL AAAI PAPERS
% COMMENT OUT AUTHOR NAMES FOR SUBMISSION
% ENABLE AUTHOR NAMES FOR FINAL CAMERA READY COPY
%%%%%%%%%%%%%%%%%%%%%%%%%%%%%%
% PDFINFO for PDFTeX
% Uncomment and complete the following for metadata if
% your paper is typeset using PDFTeX
\pdfinfo{
/Title (Unbiased Sparse Subspace Clustering By Selective Pursuit)
%/Subject (Input the Proceedings Title Here)
/Author (Hanno Ackermann, Michael Yang, Bodo Rosenhahn)
}
%%%%%%%%%%%%%%%%%%%%%%%%%%%%%%
% Section Numbers
% Uncomment if you want to use section numbers
% and change the 0 to a 1 or 2
\setcounter{secnumdepth}{1}
%%%%%%%%%%%%%%%%%%%%%%%%%%%%%%
\title{Unbiased Sparse Subspace Clustering By Selective Pursuit}
\author{Hanno Ackermann \and Michael Yang \and Bodo Rosenhahn}% \\ Address line \\ Address line \And
% Author 3 \\ Address line \\ Address line}
% \begin{document}
% \maketitle
% ...
% \bibliography{Bibliography-File}
% \bibliographystyle{aaai}
% \end{document}
% \pubnote{\em To appear, AAAI-10} % optional, remove for submission
%
% \pubnote is for printing the paper yourself, and should not be used in
% submitted versions!!!!
% Author information can be set in various styles:
% For several authors from the same institution:
% \author{Author 1 \and ... \and Author n \\
% Address line \\ ... \\ Address line}
% if the names do not fit well on one line use
% \author{Author 1 \\ {\bf Author 2} \\ ... \\ {\bf Author n} \\
% Address line \\ ... \\ Address line}
% For authors from different institutions:
% \author{Author 1 \\ Address line \\ ... \\ Address line
% \And ... \And
% Author n \\ Address line \\ ... \\ Address line}
% To start a separate ``row'' of authors use \AND, as in
% \author{Author 1 \\ Address line \\ ... \\ Address line
% \AND
% Author 2 \\ Address line \\ ... \\ Address line \And
% Author 3 \\ Address line \\ ... \\ Address line}
% If the title and author information does not fit in the area allocated,
% place \setlength\titlebox{height}
% after the \documentstyle line
% where {height} is something like 2.5in

%%%%%%%%%%%%%%%%%%%%%%%%%%%%%%%%%%%%%%%%%%%%%%%%%%%

%
\newcommand{\bm}{\mathbf}

\newcommand{\Bnu}{\boldsymbol{\nu}}
\newcommand{\Bc}{\bm{c}}

\newcommand{\Bs}{\bm{s}}

\newcommand{\Bv}{\bm{v}}

\newcommand{\Bx}{\bm{x}}
\newcommand{\By}{\bm{y}}

\newcommand{\CalH}{\mathcal{H}}
\newcommand{\CalI}{\mathcal{I}}

\newcommand{\CalS}{\mathcal{S}}

\newcommand{\RR}{I\!\!R} %real numbers
 %natural numbers
 %complex numbers
%\newcommand{\setR}{\mathbb{R}}
%

%\newcommand{\span}{\mbox{span}}

\newtheorem{corollary}{Corollary}

\newtheorem{proposition}{Proposition}

\DeclareMathOperator*{\argmax}{arg\,max}

\begin{document}

\maketitle

\begin{abstract}
Sparse subspace clustering (SSC) is an elegant approach for 
unsupervised segmentation if the data points of each cluster 
are located in linear subspaces. This model applies, for instance, 
in motion segmentation if some restrictions on the camera model 
hold. SSC requires that problems based on the $l_1$-norm are 
solved to infer which points belong to the same subspace. If 
these unknown subspaces are well-separated 
this algorithm is guaranteed to succeed. \newline
The algorithm rests upon the assumption that points on 
the same subspace are well spread. The question what happens if 
this condition is violated has not yet been investigated. 
In this work, the effect of particular distributions on the same 
subspace will be analyzed. It will be shown that SSC fails to infer 
correct labels if points on the same subspace fall into more than 
one cluster.

\end{abstract}

%%%%%%%%%%%%%%%%%%%%%%%%%%%%%%%%%%%%%%%%%%%%%%%%%%%%%%%%%%%%%%%%%%
%%%%%%%%%%%%%%%%%%%%%%%%%%%%%%%%%%%%%%%%%%%%%%%%%%%%%%%%%%%%%%%%%%
%%%%%%%%%%%%%%%%%%%%%%%%%%%%%%%%%%%%%%%%%%%%%%%%%%%%%%%%%%%%%%%%%%

%%%%%%%%% BODY TEXT
\section{Introduction}

This paper considers unsupervised classification of data of  
points which reside on multiple unknown and low-dimensional subspaces. 
The problem is to decide which points belong to the same subspace. 
This \emph{subspace clustering} arises in problems such as motion
segmentation~\cite{Kanatani01:MoSeg,Vidal2005:GCPR,Elhamifar13:SSC}, 
hand written digit clustering~\cite{Zhang2012:HLM}, 
face clustering~\cite{Kriegman2003:Faces} and
compression~\cite{Wright2005:Compression}. 

\emph{Sparse subspace clustering} (SSC) \cite{Elhamifar13:SSC} estimates 
the self-expressiveness within the data: which points can be 
used to linearly approximate a point in question? This is performed 
by minimizing $l_1$-norm problems. The support of each point 
is used to define the edge weights of a graph whose vertices correspond 
to the data points. Graph segmentation then reveals the class membership. 
In \cite{Candes2012:SSC}, SSC was theoretically analyzed and bounds for 
successful segmentation were derived.

Similar works use the nuclear norm instead of the $l_1$-norm to infer edge 
weights~\cite{Liu10:LRR}. An extension of SSC jointly estimates the 
parameters of a global subspace which includes all the 
data~\cite{Patel13:LatentSSC}. In a recent work, both steps of 
sparse optimization and spectral clustering were combined into a single, 
iterative algorithm~\cite{Vidal2015:S3C}.

A problem stems from the sparsity of the affinity matrix of the graph. As 
the edge weights are defined by the sparse coefficients, only few weights are known. 
The question then is whether the vertices of points on the same subspace
are always connected. 
%subgraphs consisting of vertices corresponding to points on the same subspace
%are always connected. 
It was first raised in 
\cite{Hartley2011:GraphConnectivity}. There, it was concluded that connectivity 
is guaranteed in subspaces of dimension $2$ and $3$, yet not for higher 
dimensions. For subspaces of dimension~$4$ an example was given in which 
the graph was disconnected.

In this paper we focus on the same question. Different prior works, we 
analyze the case that points on the same subspace are not evenly 
distributed as required in \cite{Candes2012:SSC}, but fall into different 
%\footnote{The Theorem~2.5 in
%  \cite{Candes2012:SSC} requires that points are ``well spread'' in the words
%  of the authors.}
clusters. It will be shown that a few edges indeed connect vertices of 
different clusters. However, it will also be shown that the weights of such
edges are negligible small compared with edges connecting vertices of the
same cluster. In other words, the \emph{relative} connectivity is too low to
be significant. The reason is a bias of $l_1$-norm based estimators. 
%to the results 
%presented in~\cite{Hartley2011:GraphConnectivity}, we claim that points of the 
%same subspace induce \emph{almost} disconnected sets of graph vertices for
%particular distributions of points.  
%need not induce connected graphs, not even if the subspace 
%dimension is $2$ or $3$. 
%It will be shown that particular 
%distributions of points necessarily cause disconnectivity.

The graph constructed from the sparse coefficients then consists of 
more disconnected subgraphs than the correct yet unknown number of 
subspaces. Since the subspace model is often motivated by some underlying 
physical model, for instance each rigidly moving body induces a
$4$-dimensional subspace in motion segmentation, over-segmenting 
the data can be difficult to correct. 
%This causes SSC-like algorithms which rely on $l_1$-norm estimators 
%to infer a larger number of clusters than supported by the subspace model. 
%Since the subspace model is motivated by a physical model, for instance in 
%motion segmentation where subspaces correspond to rigid motions, 
%over-segmenting the data might not be a desirable alternative. 
On the other hand, because most algorithms of this 
class~\cite{Elhamifar13:SSC,Patel13:LatentSSC,Vidal2015:S3C,Liu10:LRR} 
require the number of clusters to be known in advance, cluster centers 
will be incorrect which causes the labelling to be wrong. It is therefore 
more desirable to address the original problem in the first place.
%thus the labelling. If the number of clusters would not be necessary, 
%it would be necessary to identify clusters located on the same subspace and 
%merge them. As such a procedure is error-prone, it is more advantageous 
%to identify the cause of disconnectivity in Sparse Subspace Clustering.

This paper is structured as follows: In Sec.~\ref{Sec:Notation}, we  
summarize the notation used in this paper. In Sec.~\ref{Sec:SSC}, the original 
sparse subspace clustering algorithm is shortly explained. An example of the
effect of non-uniformly distributed points is given in 
Sec.~\ref{Sec:Example}. A formal analysis on the magnitude of the 
sparse coefficients in given in Sec.~\ref{Sec:Connectivity}. 
%The influence on the mutual information between 
%clusters is explained in Sec.~\ref{Sec:MI}. 
%The effect of non-uniformly distributed points on subspaces and the 
%resulting effect on the graph connectivity is analyzed in 
%Section~\ref{Sec:Connectivity}. 
The proposed algorithm is introduced 
in Sec.~\ref{Sec:Algo}. Experimental results are shown in 
Sec.~\ref{Sec:Exps}. The paper concludes with a summary in 
Sec.~\ref{Sec:Conclusions}.

%-------------------------------------------------------------------------
%-------------------------------------------------------------------------
%-------------------------------------------------------------------------

\section{Notation}
\label{Sec:Notation}

\begin{figure*}[t]
  \includegraphics[width=0.22\textwidth]{./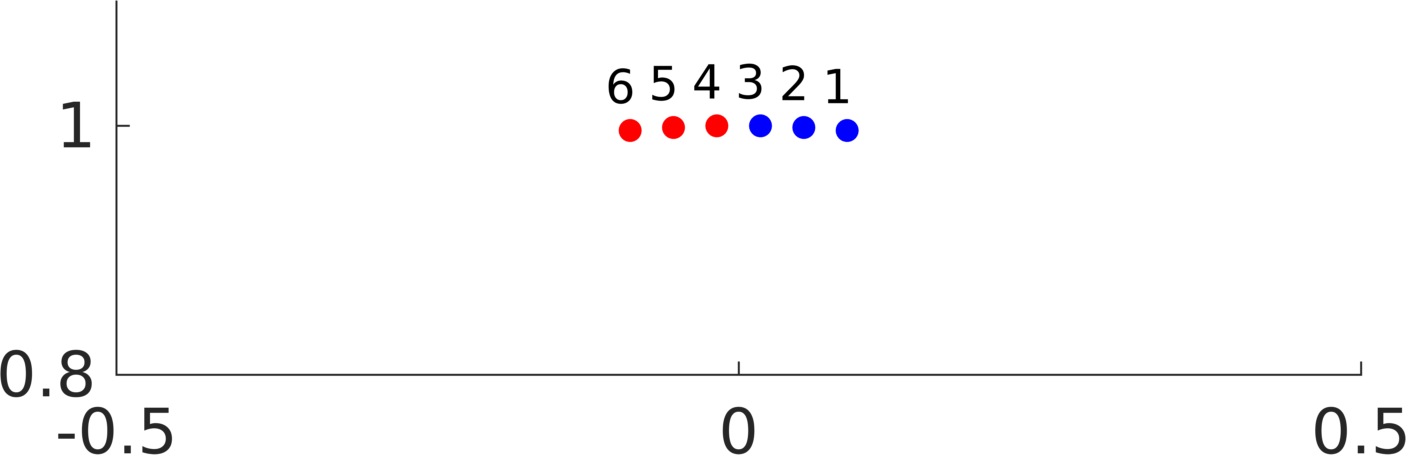}
  \includegraphics[width=0.22\textwidth]{./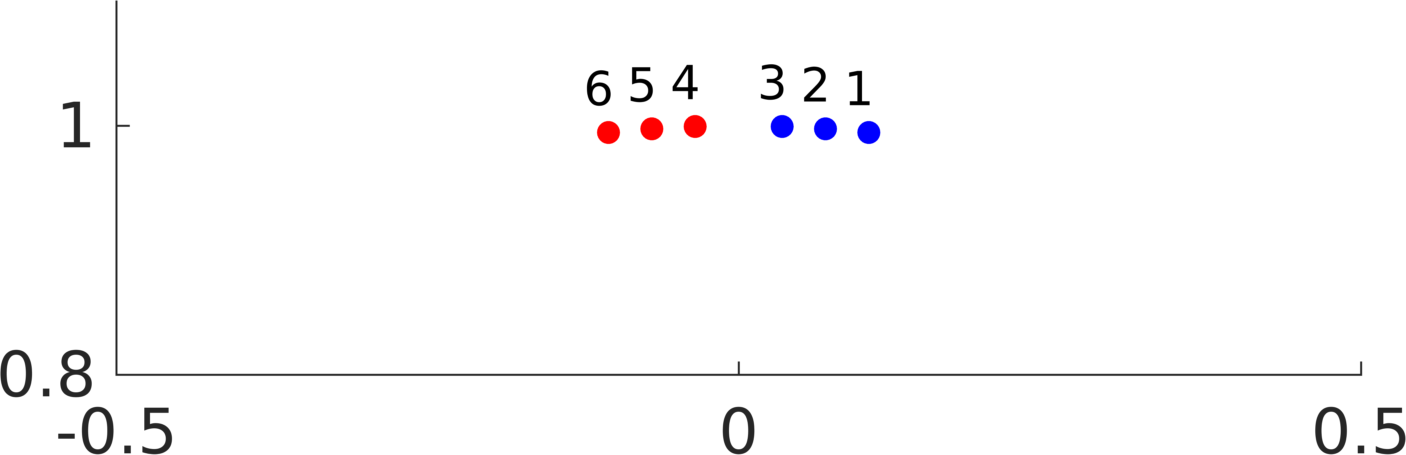}
  \includegraphics[width=0.22\textwidth]{./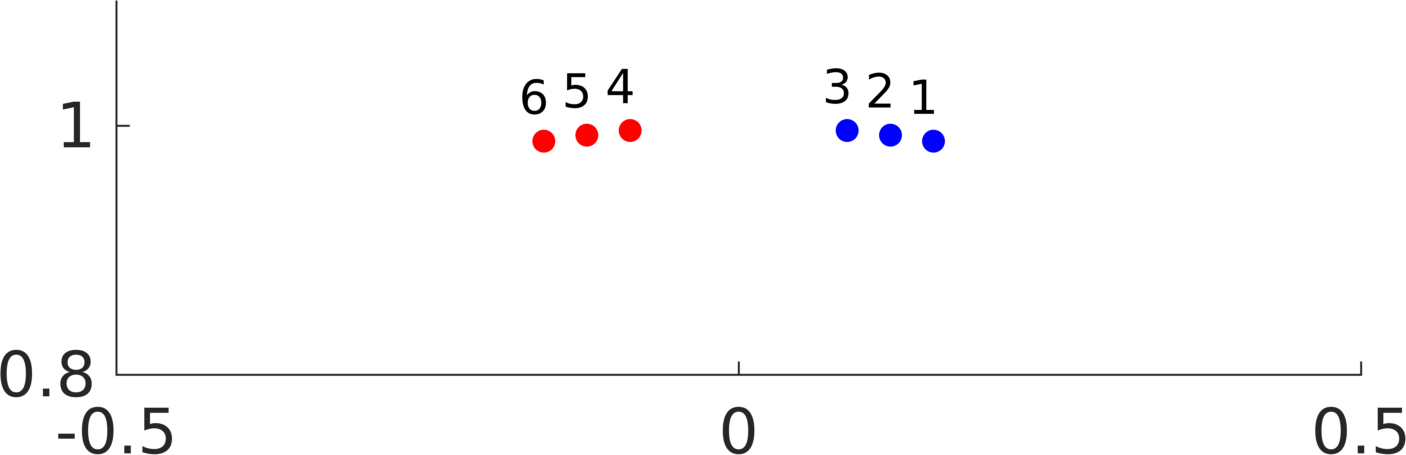}
  \includegraphics[width=0.22\textwidth]{./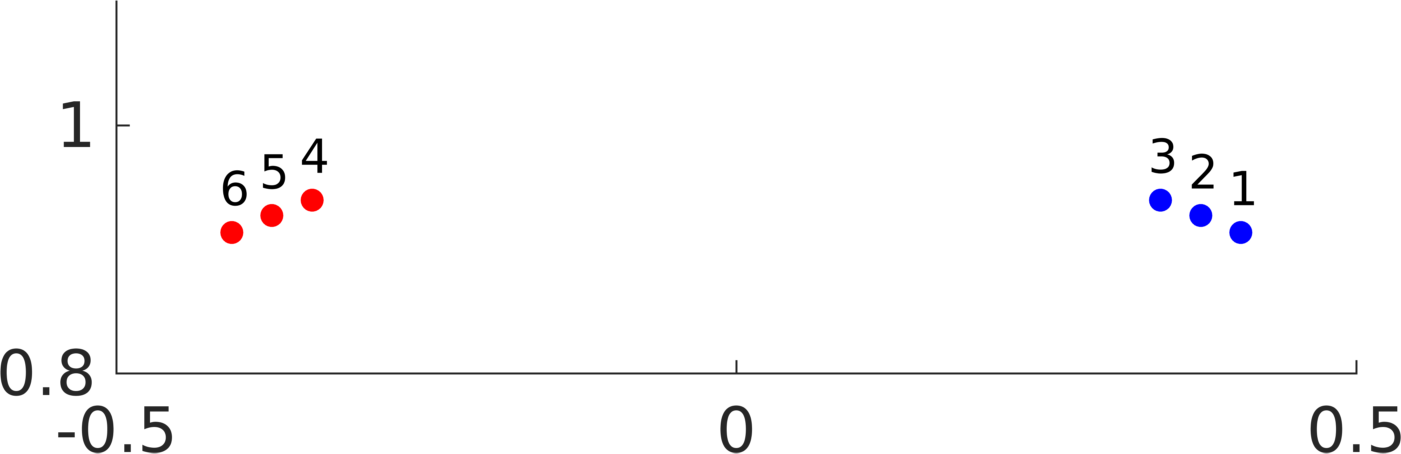}
%\vspace{0.4cm}
  \includegraphics[width=0.22\textwidth]{./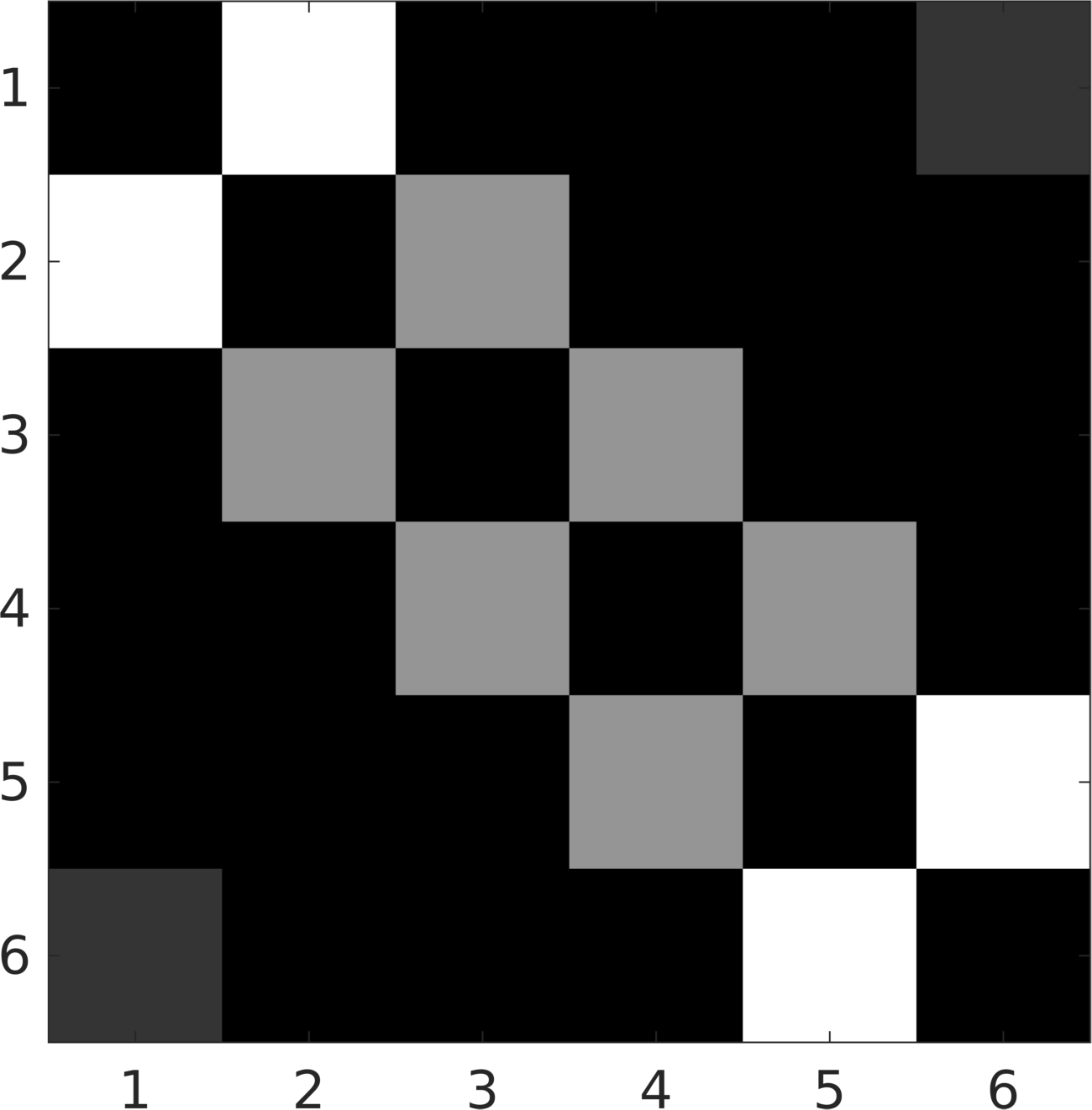}
\hspace{0.55cm}
  \includegraphics[width=0.22\textwidth]{./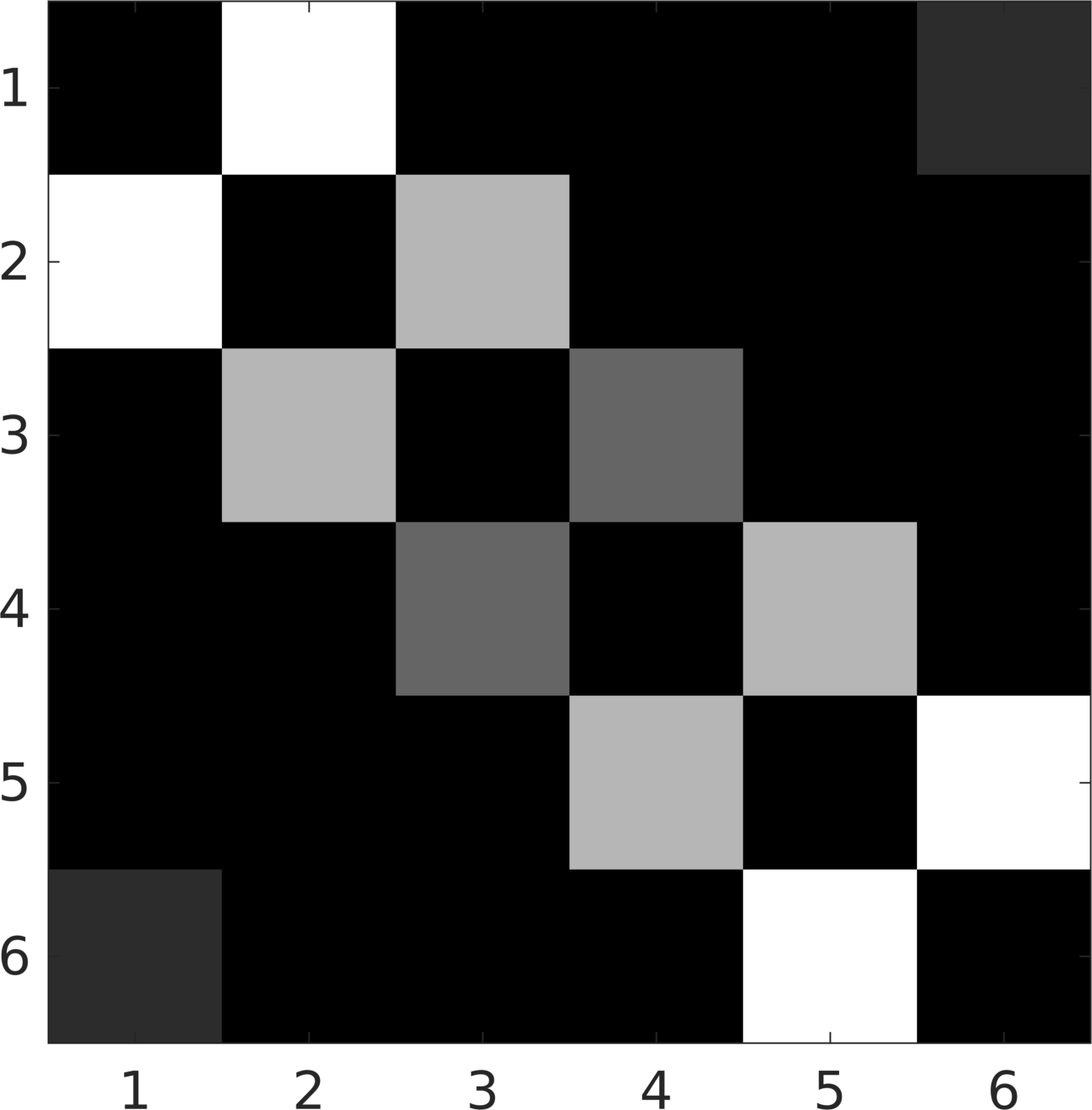}
\hspace{0.55cm}
  \includegraphics[width=0.22\textwidth]{./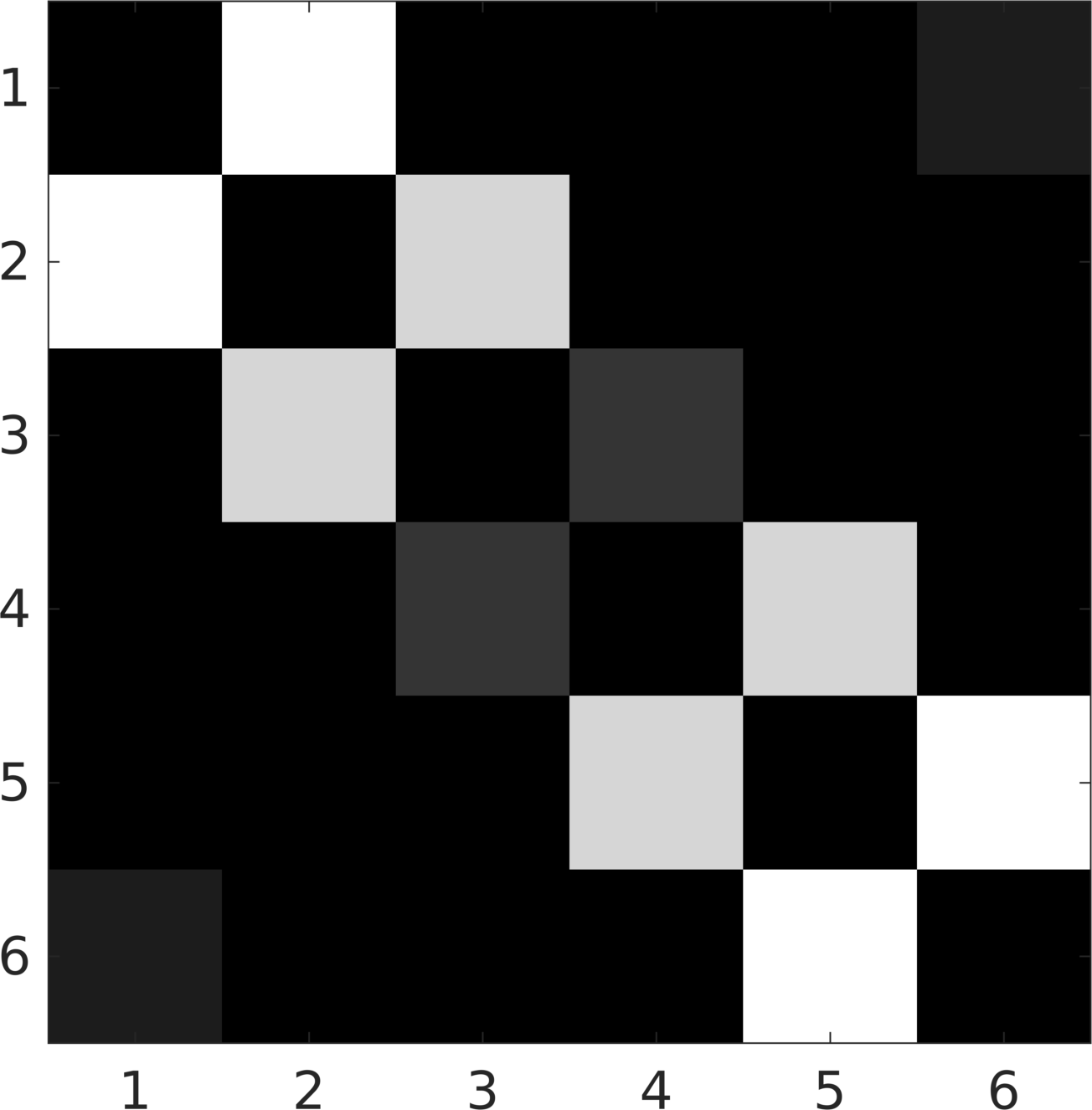}
\hspace{0.55cm}
  \includegraphics[width=0.22\textwidth]{./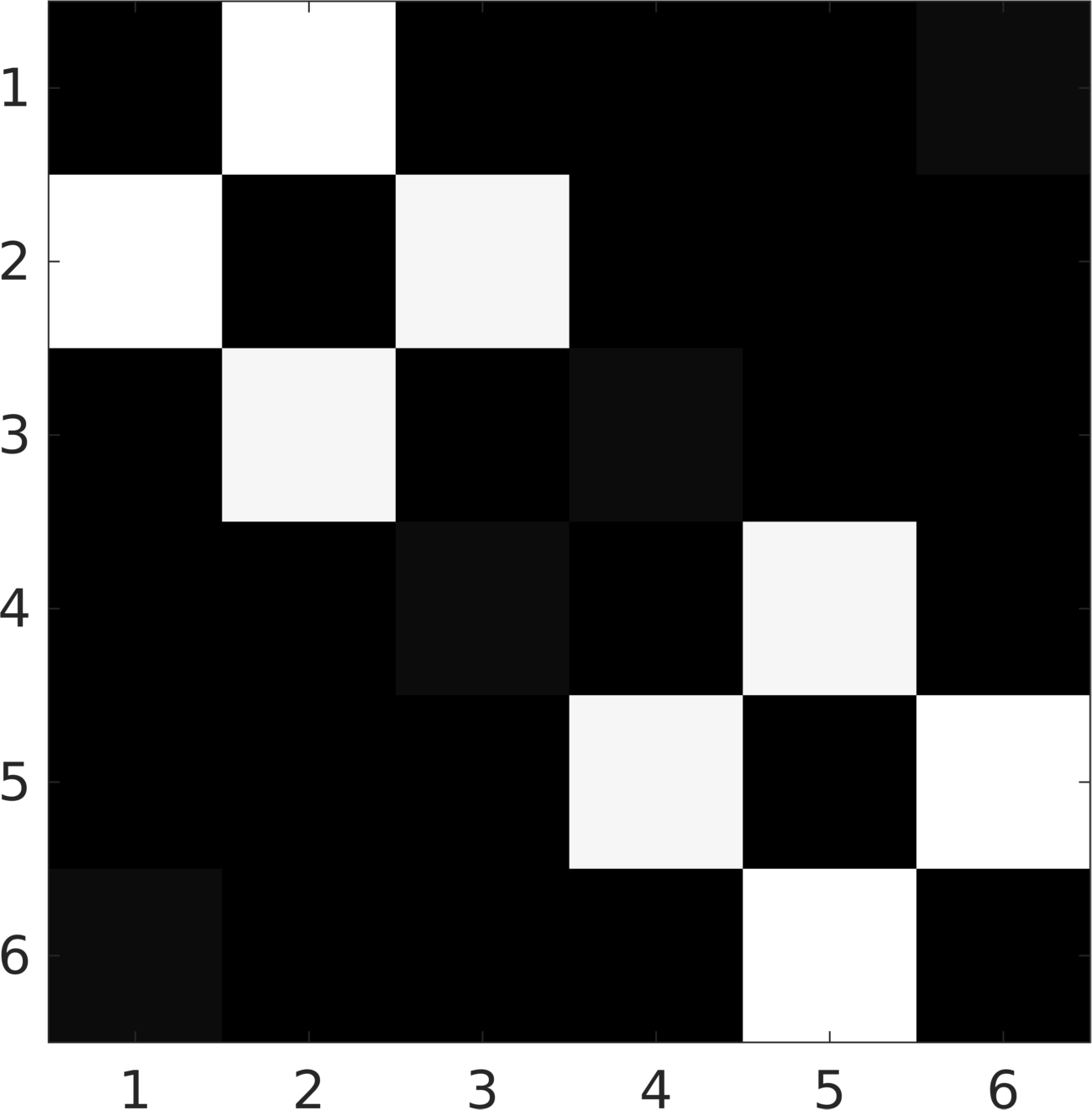}
  \caption{Upper row: two sets of points (red and blue) separated by angles of
  $2^\circ$, $4^\circ$, $10^\circ$ and $20^\circ$ from left to right. Lower
  row: corresponding connectivity matrices $|C|+|C^\top|$ resulting from
  solving Eq.~\eqref{Eq:L1Optim} with $\lambda = 0.01$. The columns/rows
  correspond with the numbering of the points above. \newline
  The larger the angle between the two sets of points, the lower the
  connectivity between the two clusters as the magnitude of the entries on the
  anti-diagonal becomes smaller and smaller. For an angle of $20^\circ$, the
  two sets almost form two disconnected clusters.} 
  \label{Fig:Connectivity.Example}
\end{figure*}

Bold lower-case letters $\Bx$ indicate vectors, normal capital letters 
matrices, e.g. $X = [ \Bx_1 \; \cdots \; \Bx_N ]$, or constants, e.g. $N$. 
By $\Bx(j)$, we mean the $j$th element of vector $\Bx$, and by $A(i,j)$ the
$(i,j)$th entry of matrix $A$. 
Sparse subspace clustering rests upon the assumption that there are $L$
subspaces $\CalS^{(l)} \in \RR^m$, $l=1,\ldots,L$ with $N_l$ points 
$X^{(l)} = [ \Bx^{(l)}_1,\ldots,\Bx^{(l)}_{N_l}]$ on it. Let $d^{(l)}$ denote 
the dimension of $\CalS^{(l)}$. Let 
$X = [ X^{(1)} \, \ldots \, X^{(L)} ]$ denote the matrix of all 
$N_1 + \cdots + N_L = N$ data points. Assume without loss of generality 
that $X$ is ordered. The points $\Bx_j$ are supposed to have unit length 
$\| \Bx_j \|_2 = 1$.

Let $T^{(l)}$ indicate the set of indices of all the points 
$\Bx^{(l)} \in \CalS^{(l)}$. Let $S_j \subseteq T^{(l)}$ indicate the support 
set of point $\Bx_j$, $j=\{1,\ldots,N\}$. %, $S_j = \mbox{supp}(\Bx_j)$.
The cardinality of a set is denoted by $| \cdot |$. 
The $k$th point of the support set $S_j$ of $\Bx^{(l)}_j$ is denoted by 
$\Bx^{(l)}_{j,k}$. % \in \mbox{supp}\left\{\Bw_j\right\}$ 
Let $\By$ be any of the points of $X$. The matrix $X_{-y}$ indicates the 
matrix $X$ without the column corresponding to $\By$.

By $\| \cdot \|_{\{0,1,2,F\}}$ we indicate the $l0$-pseudo-norm, or the $l1$-, 
$l2$-, or Frobenius-norm of the argument.

\section{Sparse Subspace Clustering}
\label{Sec:SSC}

% Given $n$ points $\Bw_j$, $\|\Bw_j\|_2=1$, in an $m$-dimensional space, we assume 
% that subsets of these points span a $d$-dimensional 
% %$k = 1,\ldots, K$ subsets $\CalC_k$ of these points span a $d$-dimensional 
% linear subspace. Let $W$ denote the matrix consisting of the vectors 
% $\Bw_j$ as columns. 

\emph{Sparse subspace clustering} (SSC)~\cite{Elhamifar13:SSC} is based 
on the fact that any point $\Bx_j \in \CalS^{(l)}$ can be expressed by a 
linear combination of $d^{(l)}$ other points in the same subspace. Since 
such linear combinations do not use points of other classes, they can 
be used to infer the unknown class labels.

% In other words, the equation system 
% $\Bw_j = W \cdot \Bc_j$, $c_{jj} = 0$, holds true if at least $d$ entries 
% of the vector $\Bc_j$ corresponding to other points of the same class as 
% $\Bw_j$ are non-zero. 

% the vector $\Bc_j$ consisting 
% of the coefficients of the linear combination $\Bw_j = W \cdot \Bc_j$
% The idea is to use the least number of other 
% points $\Bw_k$, $\k \neq j$, to express $\Bw_j$. Formally

The problem is to compute the least number of points $\Bx_k$, 
$k \neq j$ such that a linear combination of the $\Bx_k$ yields $\Bx_j$. 
Denote by $\Bc_j$ the vector of mixing coefficients with its $j$th entry 
being equal to zero such that $\Bx_j = X \Bc_j$ holds true. The task to 
estimate the vector $\Bc_j$ can be solved by minimizing
\begin{align}
  \min{ \|\Bc_j \|_0 } \quad  s.t. \quad \| X \Bc_j - \Bx_j \|_2^2 = 0 \;\; \mbox{and} \;\; c_{jj} = 0.   
  \label{Eq:L0Optim}
\end{align}

Since optimizing the $l_0$-norm is difficult, 
it is often approximated by using the $l_1$-norm. Allowing for 
data points $\Bx_j$ contaminated by noise, it is possible to instead
optimize
\begin{align}
  \min{ \|\Bc_j \|_1 } \quad  s.t. \quad \| X \Bc_j - \Bx_j \|_2^2 \leq \lambda \;\; \mbox{and} \;\; c_{jj} = 0.   
  \label{Eq:L1Optim}
\end{align}
for a scalar $\lambda>0$.

Given the matrix $C = \begin{bmatrix} \Bc_1 & \cdots & \Bc_N \end{bmatrix}$, 
class labels can be inferred by means of spectral clustering of the graph 
$G=(V,E)$ with $|V|=N$ vertices corresponding to the points $\Bx_j$ and edge 
weights $E_{ij}$ defined by the matrix $|C|+|C|^\top$. These steps are 
the basis of the so-called sparse subspace clustering (SSC) algorithm 
proposed in \cite{Elhamifar13:SSC}.

%Obvious: number of nonzero coefficients in $\Bc_j$ indicates rank. Obvious: causes 
%sparsity of $C$.

%{\color{red} Überleitung: Problem der sparsity diskutieren}

\section{Subspace Connectivity in $\RR^2$}
\label{Sec:Example}

% \begin{figure*}[ht]
%   \includegraphics[width=0.22\textwidth]{./Example/Points_Gap_00.png}
%   \includegraphics[width=0.22\textwidth]{./Example/Points_Gap_02.png}
%   \includegraphics[width=0.22\textwidth]{./Example/Points_Gap_05.png}
%   \includegraphics[width=0.22\textwidth]{./Example/Points_Gap_20.png}
% %\vspace{0.4cm}
%   \includegraphics[width=0.22\textwidth]{./Example/Connectivity_Gap_00.png}
% \hspace{0.55cm}
%   \includegraphics[width=0.22\textwidth]{./Example/Connectivity_Gap_02.png}
% \hspace{0.55cm}
%   \includegraphics[width=0.22\textwidth]{./Example/Connectivity_Gap_05.png}
% \hspace{0.55cm}
%   \includegraphics[width=0.22\textwidth]{./Example/Connectivity_Gap_20.png}
%   \caption{Upper row: two sets of points (red and blue) separated by angles of
%   $2^\circ$, $4^\circ$, $10^\circ$ and $20^\circ$ from left to right. Lower
%   row: corresponding connectivity matrices $|C|+|C^\top|$ resulting from
%   solving Eq.~\eqref{Eq:L1Optim} with $\lambda = 0.01$. The columns/rows
%   correspond with the numbering of the points above. \newline
%   The larger the angle between the two sets of points, the lower the
%   connectivity between the two clusters as the magnitude of the entries on the
%   anti-diagonal becomes smaller and smaller. For an angle of $20^\circ$, the
%   two sets almost form two disconnected clusters.} 
%   \label{Fig:Connectivity.Example}
% \end{figure*}

% The analysis in this section solely concentrates on points on a single
% subspace $\CalS^{(l)}$. For the sake of simplicity, the superscript is omitted
% in the following. 

Since only few edge weights are known, the authors of
\cite{Hartley2011:GraphConnectivity} raised the question whether connectivity 
between vertices of the same cluster is always guaranteed. In other words, 
is the subgraph consisting of the vertices of one particular cluster
connected? Their answer was that connectivity is guaranteed -- and thus a 
correct result of SSC -- if the subspace dimension is $d=2$ or $d=3$, 
but an example was given for $d=4$ where connectivity is
violated. Independently, the authors of \cite{Candes2012:SSC} concluded that 
SSC clusters correctly as long as the points are well spread within each 
cluster.

In this work, we will analyze the connectivity if the data is not well spread 
across each subspace but forms more or less well isolated clusters \emph{in a
  particular subspace}. As an example, consider
Fig.~\ref{Fig:Connectivity.Example}. Here, the upper row shows plots of two
sets of points (depicted in red and blue). From left to right, the angle
between the two sets increases from $2^\circ$ to $4^\circ$ and $10^\circ$
until $20^\circ$. The bottom row of Fig.~\ref{Fig:Connectivity.Example} 
shows the corresponding affinity matrices $|C|+|C^\top|$ resulting from
optimizing Eq.~\eqref{Eq:L1Optim} with $\lambda=0.01$. The columns and rows
correspond to the numbering of the points in the upper row of
Fig.~\ref{Fig:Connectivity.Example}. It can be seen that the $(3,4)$ and
$(4,3)$ entries have the same magnitudes as the $(2,3)$, $(3,2)$ and $(4,5)$,
$(5,4)$ entries for an angle of $2^\circ$. However, at an angle of $20^\circ$,
the entries on the anti-diagonal have almost negligible magnitude as compared
to the other entries on the diagonal. Therefore, the resulting graph consists
of two almost disconnected clusters.
%\footnote{The Matlab-code of this example is
%  provided with the supplementary material.}

Apparently, the question raised in \cite{Hartley2011:GraphConnectivity}, namely
whether there is connectivity, i.e. are there non-zero entries in the affinity
matrix, is not sufficient. The affinities shown on the anti-diagonal of the
rightmost affinity are $>0$ yet their magnitude is negligible as compared to
the other entries. 

This example shows that spurious clusters can emerge if sparse coefficients 
are used as affinities. The example is equivalent to points on an affine line
($d=2$) normalized to unit length. It therefore readily generalizes to 
higher-dimensional structures in $\RR^m$, $m>2$. The remainder of this paper
focuses on the question how gaps between data on the same subspace influence
the solutions $\Bc_j$ of Eq.~\eqref{Eq:L1Optim} and thus the \emph{relative}
connectivity. The relative connectivities then determine if the corresponding
vertices of the graph form more or less disconnected clusters.

%In the following, we present an example in $\RR^2$ of points on the same 
%line which cause an incorrect clustering. The idea is 
%motivated by the condition given in \cite{Candes12:SSC}.

\section{The Bias of $l_1$-Norm Estimators}
\label{Sec:Connectivity}

The analysis in this section solely concentrates on points on a single
subspace $\CalS^{(l)}$. For the sake of simplicity, the superscript $(l)$ 
is omitted in the following. 

% Let $\Bx^{(l)}_{j,k}$ % \in \mbox{supp}\left\{\Bw_j\right\}$ 
% indicate the $k$th point from the support set $S_j$ of $\Bx^{(l)}_j$. 
% the set of $K$ points supporting $\Bw_j$. In other words, let $\Bc_j$ be the
% solution to Eq.~\eqref{Eq:L1Optim} for $\Bw_j$. The point $\Bw_k$ is an
% element of the spport of $\Bw_j$ if and only if the $k$th entry of $\Bc_j$ is
% non-zero. 
% ar, $k=1,\ldots,K$ . 
% Assume without loss of 
% generality that the corresponding coefficients $c_{j,k}$ are ordered such that 
% $c_{j,1} \geq \ldots \geq c_{j,K}$.
Let the angle between two points $\Bx_{j_1}$ and $\Bx_{j_2}$, $j_1 \neq j_2$ be 
defined by the inverse cosine of the scalar product of two unit length
vectors, $\angle(\Bx_{j_1},\Bx_{j_2}) = \mbox{acos}(\Bx_{j_1}^\top \cdot \Bx_{j_2})$. 
Assume without loss of generality that the points $\Bx_{j,k}$ are ordered 
such that 
$\angle(\Bx_j,\Bx_{j,1}) \geq \cdots \geq \angle(\Bx_j,\Bx_{j,K})$. 

% The idea used in this section is to separate $\Bw_{j,1}$ from the remaining points 
% and define a subspace $\mbox{span} \begin{bmatrix} \Bw_{j,2} & \cdots & \end{bmatrix}$. 

\subsection{Noiseless $l_1$-Norm: $\lambda = 0$}
\label{Sec:Sub:Noiseless}
Given a point $\By$, a matrix $X_{-y}$ and the $K$ points $\Bx_{y,k}$,
$k=1,\ldots,K$ of a support set $S_y$ of $\By$, %, $S_y = \mbox{supp}(\By)$. 
and let $\Bc_y$ be a solution to $\| X_{-y} \Bc_y - y \|_2^2 \leq \lambda$ so that all 
entries but those corresponding to the points $\Bx_{y,k}$ in $X_{-y}$ 
are zero. The question then is how does $\| \Bc_y \|_1$ change if a 
different support set $S_y^\prime \subseteq T_y$ is selected.

% There analysis here focuses on the closest supporting point $\Bx_{y,1}$. 
% In particular, it will be analyzed how $\| \Bc_y \|_1$ changes while 
% the angle between $\By$ and $\Bx_{y,1}$ increases.

The geometry of this case is shown in the left plot of 
Fig.~\ref{Fig:Geometry}. The intersection of the red dash-dotted line with 
the line through the point $\Bx_{y,1}$ indicates the point $c_{y}(1) \Bx_{y,1}$. 
The point $\Bs$ indicates the intersection of this line with the subspace 
 $\mbox{span} (Q)$ spanned by the remaining points $Q = \begin{bmatrix}
   \Bx_{y,2} & \cdots & \end{bmatrix}$. It can be seen that $\|\Bs\|_1$ grows
 as the angle $\theta$ between $\mbox{span} (Q)$ and $\mbox{span} (\Bx-\By)$
 decreases. This intuition motivates the following proposition:

\begin{figure*}[t]
    \mbox{\includegraphics[width=0.3\textwidth]{./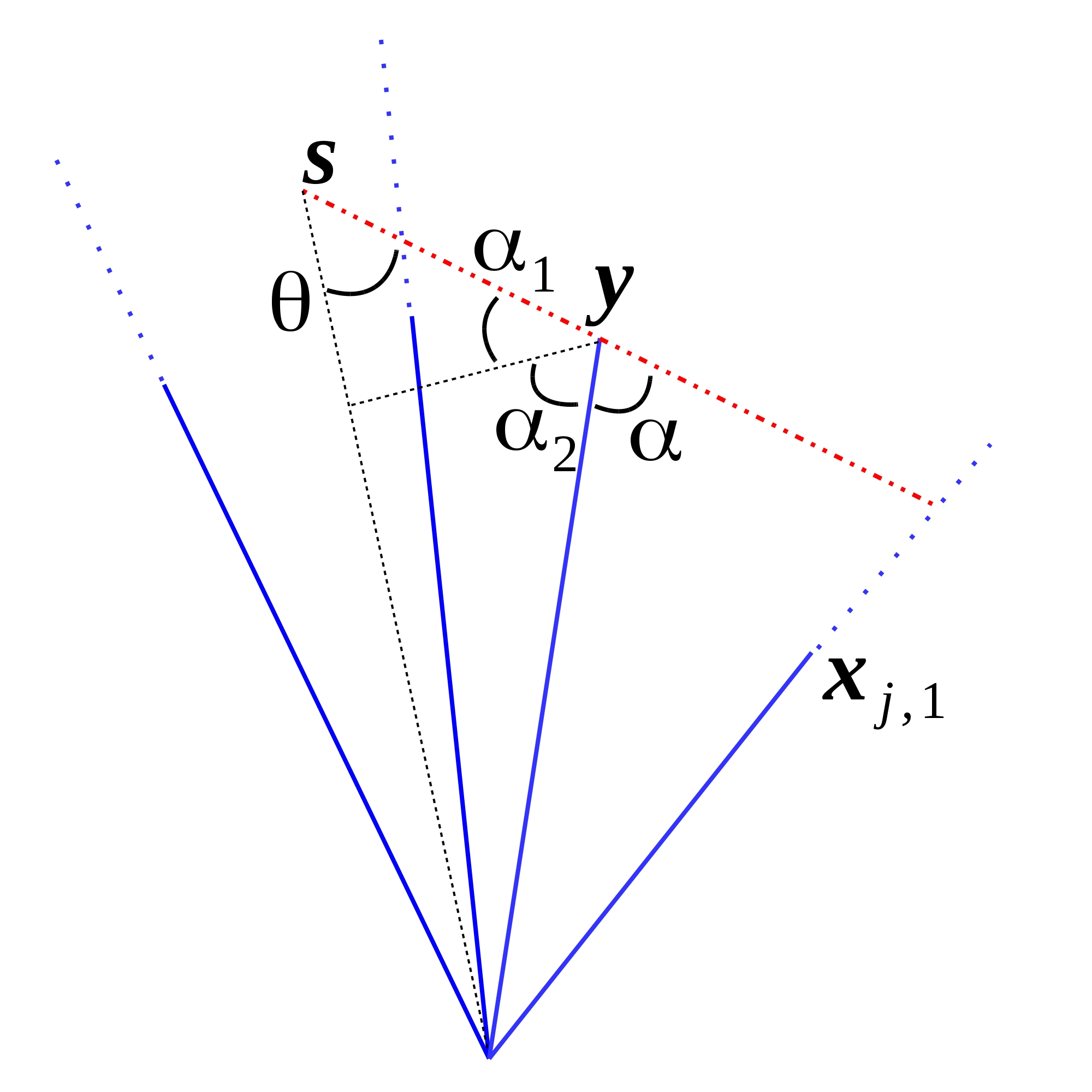}}
    \mbox{\includegraphics[width=0.3\textwidth]{./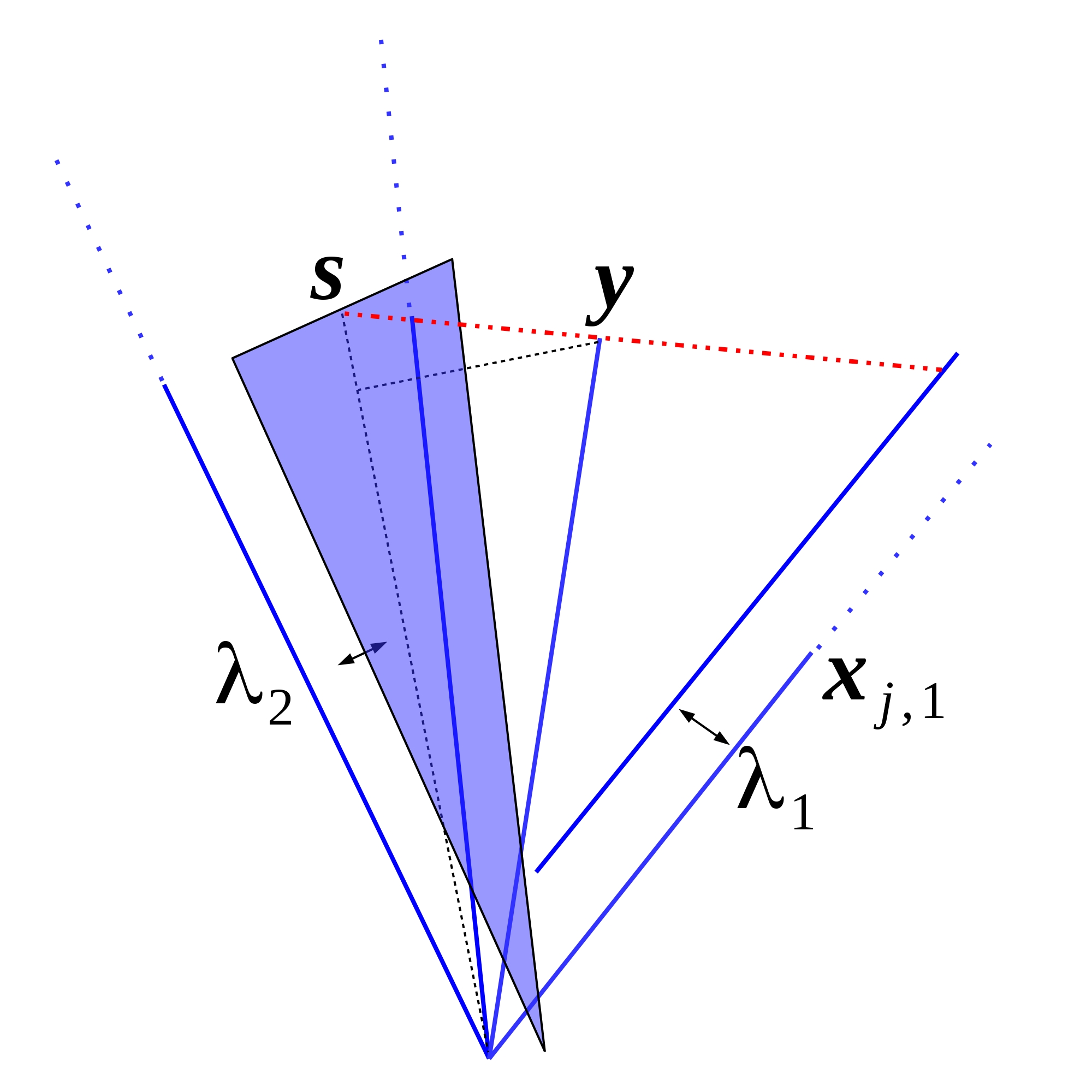}}
    \mbox{\includegraphics[width=0.3\textwidth]{./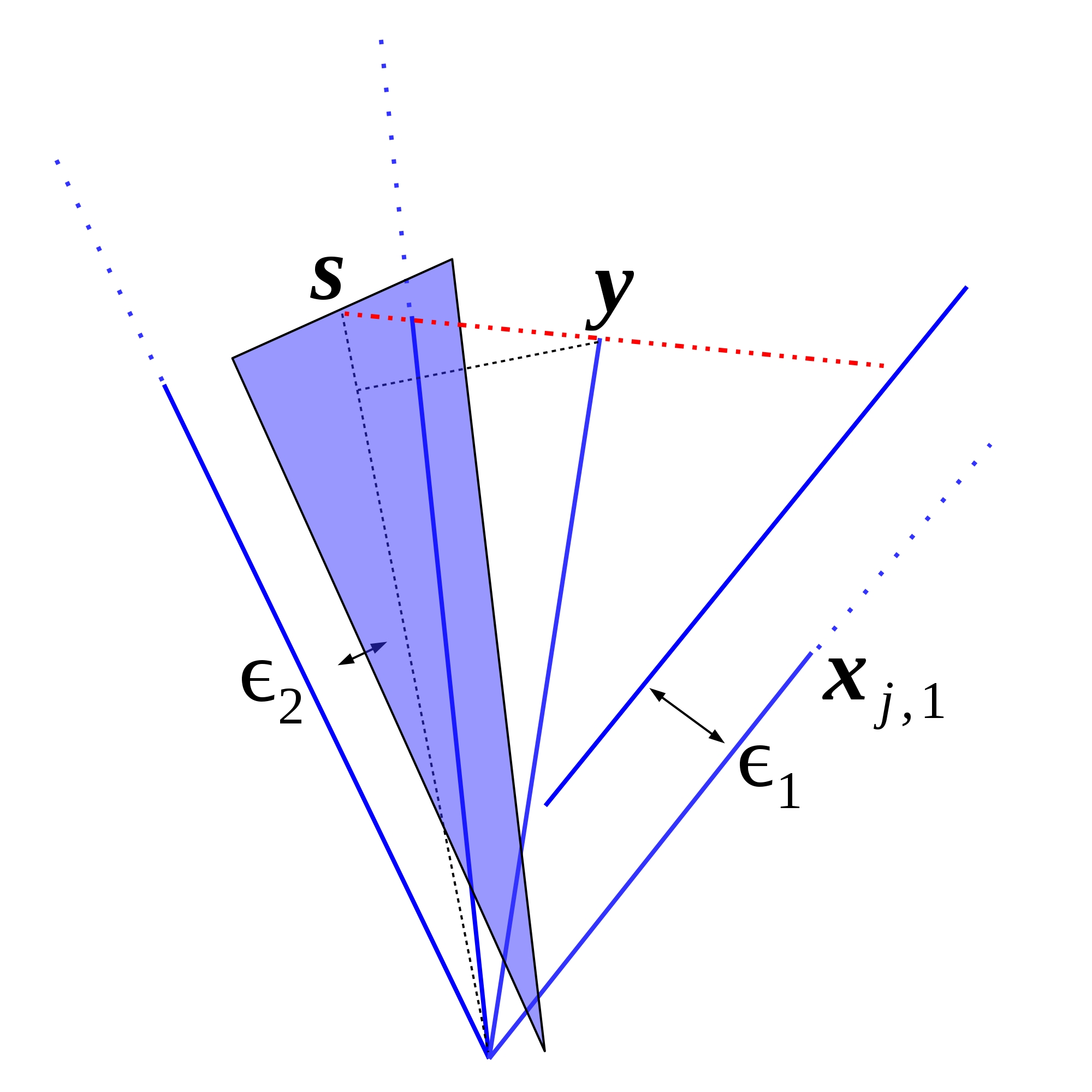}}
  \caption{Geometry of different $l_1$-norm estimators. Left: noiseless case 
    (Eq.~\eqref{Eq:L1Optim} with $\lambda = 0$); middle: noisy $l_1$-norm 
    (Eq.~\eqref{Eq:L1Optim} with $\lambda > 0$); right: lasso.}
\label{Fig:Geometry}
\end{figure*}

\begin{proposition}
The $l_1$-norm of $\Bc_y$ increases with the angle 
$\angle \left( \By, \Bx_{y,1} \right)$ between $\By$ and $\Bx_{y,1}$
  \begin{equation}
%    \| c \|_1 \geq f \left( \angle \left( \Bw_j, \Bw_{j,1} \right) \right) 
%    \| c \|_1 \geq \cot \left( \cos^{-1} \frac{\left(c_{j,1}\Bw_{j,1} - \Bw_j\right)^\top \Bw_j}{\left\|c_{j,1}\Bw_{j,1} - \Bw_j\right\|_2} + \rho - \frac{\pi}{2} \right) \| \Bnu \|_2
    \| \Bc_j \|_1 \geq \cot \left(f \left( \angle \left( \By, \Bx_{y,1} \right) \right)  + \rho - \frac{\pi}{2} \right) \cdot \eta.
    \label{Eq:Noiseless.Limit.l1}
  \end{equation}
  % where
  % \begin{equation}
  %   f \left( \angle \left( \Bw_j, \Bw_{j,1} \right) \right) = 
  %   \cos^{-1} \frac{\left(c_{j,1}\Bw_{j,1} - \Bw_j\right)^\top \Bw_j}{\left\|c_{j,1}\Bw_{j,1} - \Bw_j\right\|_2}
  % \end{equation}
  \label{Prop:Noiseless.Dependency}
\end{proposition}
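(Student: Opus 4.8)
The plan is to reduce the statement to planar trigonometry by splitting off the extremal support point. Setting $\Bs := \By - c_y(1)\Bx_{y,1} = \sum_{k=2}^{K} c_y(k)\Bx_{y,k}$, the residual $\Bs$ lies in $\mbox{span}(Q)$ with $Q=[\,\Bx_{y,2}\;\cdots\;\Bx_{y,K}\,]$, and the three vectors $\By$, $\Bx_{y,1}$ and $\Bs$ all live in the two-dimensional plane $\Pi = \mbox{span}(\Bx_{y,1},\Bs)$. Inside $\Pi$ the whole question collapses to a statement about the triangle with vertices at the origin, at $c_y(1)\Bx_{y,1}$ and at $\By$ — exactly the configuration drawn in the left panel of Fig.~\ref{Fig:Geometry}. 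I would then name the relevant angles: $\alpha := \angle(\By,\Bx_{y,1})$ (the quantity that is varied), $\rho$ the principal angle between $\By$ and $\mbox{span}(Q)$, and let $f$ denote the opening angle of that triangle at $\By$ written as a function of $\alpha$.

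The next step is to solve the triangle. Projecting the identity $\By = c_y(1)\Bx_{y,1} + \Bs$ onto the unit normal of $\mbox{span}(Q)$ inside $\Pi$ annihilates $\Bs$ and gives $|c_y(1)|$ as a ratio of sines; equivalently, the law of sines yields both $|c_y(1)|$ and $\|\Bs\|_2$ as ratios of $\sin\alpha$, $\sin\rho$ and $\sin f(\alpha)$. Since every $\Bx_{y,k}$ has unit length, $\sum_{k\ge 2}|c_y(k)| \ge \|\sum_{k\ge 2} c_y(k)\Bx_{y,k}\|_2 = \|\Bs\|_2$, so that $\|\Bc_y\|_1 = |c_y(1)| + \sum_{k\ge 2}|c_y(k)| \ge |c_y(1)| + \|\Bs\|_2$. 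I would substitute the sine ratios, collect the length factors into a single positive constant $\eta$, and rewrite the surviving cofactor of the form $\cos/\sin$ through $\cot(\phi)=\cos\phi/\sin\phi$ evaluated at $\phi = f(\alpha)+\rho-\tfrac{\pi}{2}$, which reproduces Eq.~\eqref{Eq:Noiseless.Limit.l1}. Monotonicity then follows by checking that this argument decreases as $\alpha$ grows toward the degenerate configuration in which $\Bx_{y,1}$ becomes anti-aligned and the triangle flattens ($\sin(\alpha+\rho)\to 0^+$): since $\cot$ is decreasing on $(0,\pi)$, the bound increases. Here the ordering $\angle(\By,\Bx_{y,1}) \ge \cdots \ge \angle(\By,\Bx_{y,K})$ guarantees that $\Bx_{y,1}$ is the extremal direction and keeps $\phi$ inside the admissible range.

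The step I expect to be the main obstacle is controlling $\sum_{k\ge 2}|c_y(k)|$ from below \emph{without} reference to the internal geometry of $Q$: the inequality $\|a\|_1 \ge \|\Bs\|_2$ is clean but can be loose, so to obtain a bound valid for every admissible support set I would either argue it is the tightest configuration-free estimate or replace it by one phrased through the smallest principal angle of $\mbox{span}(Q)$. A second delicate point is positivity of the right-hand side: the cotangent is a valid lower bound only while $f(\alpha)+\rho-\tfrac{\pi}{2}$ stays in $(0,\tfrac{\pi}{2})$, and this must be tied back to the assumption that $\Bx_{y,1}$ is separated from $\By$ by the largest angle, so that $\By$ genuinely lies between $\Bx_{y,1}$ and $\mbox{span}(Q)$ and the normal-projection argument cannot change sign.
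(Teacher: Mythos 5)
Your proposal follows essentially the same route as the paper's own proof: split off the extremal support point via $\By = c_y(1)\Bx_{y,1} + \Bs$, reduce to the planar triangle of Fig.~\ref{Fig:Geometry}, obtain $\|\Bs\|_2 = \cot\theta \cdot \eta$ with $\eta = \|(I-P_Q)\By\|_2$ by elementary trigonometry, and finish with the norm inequality $\|\cdot\|_1 \geq \|\cdot\|_2$. The two delicate points you flag --- the looseness of $\sum_{k\ge 2}|c_y(k)| \ge \|\Bs\|_2$ without reference to the internal geometry of $Q$, and keeping the cotangent argument in its admissible range --- are precisely the ones the paper handles only implicitly, by assuming the support is chosen so that $\mbox{span}(Q) \perp \mbox{span}(\begin{bmatrix}\By & \Bx_{y,1}\end{bmatrix})$.
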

\begin{proof}
%See the supplementary material for a proof.
Let $\Bv_1 = (c_{j,1}\Bx_{y,1}-\By) / \| c_{j,1}\Bx_{y,1}-\By \|_2 $,  
$\Bv_2 = (P_Q \By - \By) / \| P_Q \By - \By \|_2$ where $P_Q$ is the 
orthogonal projector onto $\mbox{span}(Q)$, and $f\left( \angle \left( \By,
    \Bx_{y,1} \right) \right) = \cos^{-1} \Bv_1$. 
Let $\alpha = \cos^{-1} \Bv_1^\top \By$ be the angle between the line connecting 
$\By$ and $c_{j,1}\Bx_{y,1}$. Assuming that $|T_{y}| > |S_{y}|$, the 
support set $\Bx_{y,2},\ldots$ is selected such that $\mbox{span}(Q) 
\perp \mbox{span}(\begin{bmatrix} \By & \Bx_{y,1} \end{bmatrix})$ 
in order to minimize $\|\Bx\|_2$. 
Then, we can define the angles $\alpha_1 = \cos^{-1} -\By^\top \Bv_2$, 
$\alpha_2 = \pi - \alpha - \alpha_1$, and $\theta = \alpha + \alpha_1 
- \pi/2$ which motivates
\begin{equation}
  \| \Bx - P_Q \By \|_2 = \mbox{cotan} \, \theta \cdot \| (I-P_Q) \By \|_2.
\end{equation}
If we now define $\Bnu = (I-P_Q)\By$, $\eta = \| \Bnu \|_2$, and 
$\rho = \cos^{-1} (\By^\top \Bnu) / \eta$ we arrive at the claim since 
$\|\By\|_1 \geq \| \By \|_2$.% for any vector $\By$.
\end{proof}

%The significance of this proposition is that the $l_1$-norm of $\Bc_j$
%increases with the angle between $\By$ and $\Bx_{y,1}$. 
Given $\By$, and $d$ points $\Bx_{y,1},\ldots,\Bx_{y,d}$ of a support set
$S_y$ and the corresponding sparse solution $\Bc_S$ to 
$\|X_{-y}\Bc_S-\By\|_2=0$. Let $\CalH (\By,Q)$ be the plane through 
$\By$ with orthonormal basis $Q$. Assume that there is a point 
$\Bx_n \in X_{T_y \setminus S_y}$ with 
$\angle \left( \By, \Bx_n \right) < \angle \left( \By, \Bx_{y,1} \right)$ 
on the same side of $\CalH (\By,Q)$ as $\Bx_{y,1}$.  Let $S^\prime_y$ be the 
set of indices of the points $\Bx_n,\Bx_{y,2},\ldots,\Bx_{y,d}$ and 
$\Bc_{s^\prime}$ be the solution to $\|X_{-y}\Bc_{S^\prime}-\By\|_2=0$. 
\begin{proposition}
  The solution $\Bc_{s^\prime}$ of $\|X_{-y}\Bc_{S^\prime}-\By\|_2=0$ satisfies
  \begin{equation}
    \|\Bc_{S^\prime}\|_1 < \|\Bc_{S}\|_1.
  \end{equation}
  \label{Prop:Swap}
\end{proposition}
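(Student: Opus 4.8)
The plan is to obtain this as a direct consequence of the monotonicity established in Proposition~\ref{Prop:Noiseless.Dependency}, exploiting the fact that $S_y$ and $S_y^\prime$ differ in a single ``swing'' point while sharing the common in-plane support $Q = \begin{bmatrix} \Bx_{y,2} & \cdots & \Bx_{y,d} \end{bmatrix}$. First I would fix the orthogonal decomposition induced by $Q$: let $\hat{\Bnu}$ span the one-dimensional orthogonal complement of $\mbox{span}(Q)$ inside $\CalS$, oriented so that $\By^\top \hat{\Bnu} = \eta > 0$. Since every column of $Q$ lies in the plane $\CalH(\By,Q)$, the entire orthogonal component $\eta\,\hat{\Bnu}$ of $\By$ must be supplied by the swing point alone. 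Hence for either choice $\Bw \in \{\Bx_{y,1},\Bx_n\}$ the swing coefficient is pinned to $c_w = \eta / (\Bw^\top \hat{\Bnu})$, and the in-plane residual is $\Bq_w = P_Q \By - c_w\, P_Q \Bw \in \mbox{span}(Q)$. Because $\eta$, $\rho$ and $\hat{\Bnu}$ are identical for both support sets, the entire comparison reduces to how $c_w$ and $\Bq_w$ respond to moving the swing point.

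Next I would turn the angular hypothesis into the monotonicity of the swing term. The ``same side'' assumption guarantees $\Bw^\top \hat{\Bnu} > 0$ for both choices, so both swing coefficients are positive and no sign cancellation can inflate or deflate the norm spuriously. Moving the swing point angularly closer to $\By$ on the fixed side of $\CalH(\By,Q)$, i.e.\ passing from $\Bx_{y,1}$ to $\Bx_n$ with $\angle(\By,\Bx_n) < \angle(\By,\Bx_{y,1})$, strictly increases $\Bw^\top\hat{\Bnu}$ and therefore strictly shrinks $c_w = \eta/(\Bw^\top\hat{\Bnu})$. This is exactly the geometry of Proposition~\ref{Prop:Noiseless.Dependency}: the dominant term $\cot\!\left(f(\angle(\By,\Bx_{y,1})) + \rho - \tfrac{\pi}{2}\right)\cdot\eta$ increases with $\angle(\By,\Bx_{y,1})$, so evaluating the same relation at $\Bx_n$ versus $\Bx_{y,1}$ under the shared $\eta$ and $\rho$ already delivers the strict decrease of the leading contribution to $\|\Bc\|_1$.

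The step I expect to be the main obstacle is controlling the in-plane contribution, namely the $l_1$-norm of the coefficients representing $\Bq_w$ in the generally non-orthogonal data basis $Q$; this quantity is basis-dependent and so is not automatically monotone in the angle. The plan is to handle it as in the proof of Proposition~\ref{Prop:Noiseless.Dependency}: use the choice of $Q$ that is as orthogonal as possible to $\mbox{span}\!\left(\begin{bmatrix} \By & \Bx_{y,1} \end{bmatrix}\right)$ so that $P_Q\By$ and $P_Q\Bw$ become nearly collinear within $\mbox{span}(Q)$, reducing $\Bq_w$ to essentially a single direction whose length $\|\Bq_w\|_2$ vanishes as the angle tends to zero and grows with the angle, and then pass to coefficients via $\|\By\|_1 \geq \|\By\|_2$. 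Combining the strict decrease of the swing term with the non-increase of the in-plane term yields $\|\Bc_{S^\prime}\|_1 < \|\Bc_S\|_1$, the strictness being inherited from the strict angular inequality $\angle(\By,\Bx_n) < \angle(\By,\Bx_{y,1})$.
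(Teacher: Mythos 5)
Your overall strategy --- deducing the swap inequality from the monotonicity in Proposition~\ref{Prop:Noiseless.Dependency}, with $Q$, $\eta$ and $\rho$ held fixed because only the swing point changes --- is exactly the route the paper takes (its proof of Proposition~\ref{Prop:Swap} is the single sentence that it follows from Proposition~\ref{Prop:Noiseless.Dependency}). However, the concrete mechanism you give has a genuine flaw. You claim that moving the swing point angularly closer to $\By$ on the fixed side of $\CalH(\By,Q)$ \emph{strictly increases} $\Bw^\top\hat{\Bnu}$ and hence strictly shrinks the swing coefficient $c_w=\eta/(\Bw^\top\hat{\Bnu})$. This is false: for unit vectors, a smaller angle to $\By$ does not imply a larger component along $\hat{\Bnu}$. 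Take the planar case with $\hat{\Bnu}=(0,1)$, $Q$-direction $(1,0)$, $\By=(\cos 10^\circ,\sin 10^\circ)$, $\Bx_{y,1}=(0,1)$ and $\Bx_n=(\cos 50^\circ,\sin 50^\circ)$. Then $\Bx_n$ is closer to $\By$ ($40^\circ$ versus $80^\circ$) and lies on the same side of $\CalH(\By,Q)$, yet $\Bx_n^\top\hat{\Bnu}=\sin 50^\circ<1=\Bx_{y,1}^\top\hat{\Bnu}$, so the swing coefficient \emph{increases} from $\sin 10^\circ\approx 0.174$ to $\sin 10^\circ/\sin 50^\circ\approx 0.227$. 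The total $l_1$-norm still drops (from about $1.159$ to about $1.066$), but only because the in-plane coefficient falls from $\cos 10^\circ\approx 0.985$ to about $0.839$ --- that is, the term you treat as a secondary ``non-increasing'' correction is the one doing the work, while the term you call strictly decreasing actually grows.

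The correct statement is that only the \emph{sum} is monotone. In the planar normalization, writing $\eta=\sin\beta$ for the inclination of $\By$ against $\mbox{span}(Q)$ and $\phi=\angle(\By,\Bw)$, one finds $\|\Bc\|_1=(\sin\beta+\sin\phi)/\sin(\beta+\phi)$, whose derivative in $\phi$ equals $\sin\beta\,(1-\cos(\beta+\phi))/\sin^2(\beta+\phi)\geq 0$; the swing part $\sin\beta/\sin(\beta+\phi)$ alone is not monotone (it decreases until $\beta+\phi=\pi/2$ and increases afterwards). Your proof therefore needs to be restructured to compare the combined quantity rather than the two contributions separately; as written, both of your intermediate claims (strict decrease of the swing term, non-increase of the in-plane term) can fail simultaneously under the proposition's hypotheses. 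You should also note that Proposition~\ref{Prop:Noiseless.Dependency} only supplies a \emph{lower bound} on $\|\Bc\|_1$ that grows with the angle, which by itself does not yield a strict comparison between two exact solutions --- a weakness your write-up shares with the paper's own one-line proof.
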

\begin{proof}
 The proof follows from Prop.~\ref{Prop:Noiseless.Dependency}.
\end{proof}

%Proposition~\ref{Prop:Swap} can be used to find the optimal support set $S$ 
%of $\By$.
\begin{corollary}
The $d$ points $\Bx_{y,1},\ldots,\Bx_{y,d} \in X_{T_y}$ closest to $\By$ minimize $\|\Bc\|_1$ such that 
$\|X_{-y}\Bc-\By\|_2=0$.
\end{corollary}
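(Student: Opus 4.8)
The plan is to obtain the corollary as the transitive closure of the single-exchange inequality established in Proposition~\ref{Prop:Swap}. I would designate the candidate minimizer to be the support set $S^\star$ consisting of the $d$ points of $X_{T_y}$ with the smallest angles $\angle(\By,\cdot)$, and then show that any competing support set of size $d$ that also achieves exact reconstruction can be transformed into $S^\star$ by a finite sequence of exchanges, each of which strictly decreases $\|\Bc\|_1$. Since Proposition~\ref{Prop:Swap} already supplies the strict decrease for one such exchange (and it in turn rests on the monotonicity in Proposition~\ref{Prop:Noiseless.Dependency}), the corollary reduces to showing that these exchanges can be chained together all the way down to $S^\star$.

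Concretely, I would argue by contradiction. Suppose some support set $S$ with $\|X_{-y}\Bc_S-\By\|_2=0$ satisfies $\|\Bc_S\|_1 \le \|\Bc_{S^\star}\|_1$ while $S \ne S^\star$. Because $|S|=|S^\star|=d$ and the two sets differ, there is at least one point indexed by $S$ that is strictly farther from $\By$, in the angular order of Prop.~\ref{Prop:Noiseless.Dependency}, than some point indexed by $S^\star\setminus S$. Ordering the points of $S$ so that $\Bx_{y,1}$ is the farthest puts us in exactly the configuration of Proposition~\ref{Prop:Swap}, with $Q$ the orthonormal basis of the remaining $d-1$ support points and $\CalH(\By,Q)$ the associated plane. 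Replacing $\Bx_{y,1}$ by a nearer point $\Bx_n$ drawn from $X_{T_y\setminus S}$ then yields a new support set $S'$ with $\|\Bc_{S'}\|_1 < \|\Bc_S\|_1$. Each such exchange strictly lowers the $l_1$-norm while decreasing the angular ``mass'' of the support set, so after finitely many steps the support coincides with $S^\star$, giving $\|\Bc_{S^\star}\|_1 < \|\Bc_S\|_1$ and contradicting the optimality assumed for $S$.

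The hard part will be the side condition in Proposition~\ref{Prop:Swap}: the nearer replacement $\Bx_n$ must lie on the same side of the plane $\CalH(\By,Q)$ as the point $\Bx_{y,1}$ being removed. It is not automatic that, at a given step, a closer point of $X_{T_y}$ sits on the correct side, so the real work is to show that the exchanges can always be \emph{scheduled} so that this hypothesis holds — for example by a greedy rule that at each step removes the current farthest support point and inserts the nearest admissible replacement, and then verifies that this schedule never stalls before reaching $S^\star$. Alongside this I would need to rule out the degenerate configurations in which $\By$ lies on $\CalH(\By,Q)$, so that ``side'' is well defined throughout; making both of these precise is where the substantive argument lies, whereas the per-step strict decrease is already handed to us by Proposition~\ref{Prop:Swap}.
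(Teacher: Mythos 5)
Your route---obtaining the corollary by chaining the single-exchange inequality of Proposition~\ref{Prop:Swap}---is exactly the one the paper intends; in fact the paper states the corollary with no proof at all, treating it as immediate from Proposition~\ref{Prop:Swap}. In that sense you have gone further than the source by at least naming the obstruction. But as a proof the proposal is incomplete, and the incompleteness is precisely the part you flag and then defer. The side condition of Proposition~\ref{Prop:Swap} requires the replacement $\Bx_n$ to lie on the same side of $\CalH(\By,Q)$ as the point being removed, and after each exchange the set $Q$, hence the hyperplane $\CalH(\By,Q)$, changes. You propose a greedy schedule (remove the farthest, insert the nearest admissible point) and then say the ``real work'' is to verify it never stalls---but that verification is the entire content of the corollary beyond Proposition~\ref{Prop:Swap}, and it is not obviously true: nothing guarantees that at every intermediate step some closer point of $X_{T_y}$ sits on the correct side of the current hyperplane. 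A proof would have to either establish this geometrically (e.g., by showing the angular ordering of Sec.~\ref{Sec:Connectivity} is compatible with the side condition for points on a single subspace) or replace the exchange argument with a direct comparison of $\|\Bc_S\|_1$ against $\|\Bc_{S^\star}\|_1$ using the cotangent bound of Proposition~\ref{Prop:Noiseless.Dependency}. As written, the argument could stall before reaching $S^\star$ and the contradiction would not be reached.

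A second, unaddressed gap: the corollary asserts that the $d$ closest points minimize $\|\Bc\|_1$ over \emph{all} $\Bc$ satisfying $\|X_{-y}\Bc-\By\|_2=0$, whereas your exchange argument only compares feasible supports of cardinality exactly $d$. A competing minimizer could spread its mass over more than $d$ points of $X_{T_y}$ (or even touch points outside $T_y$), and neither Proposition~\ref{Prop:Swap} nor the transitive-closure argument says anything about such solutions. You would need an additional step showing that some $\ell_1$-minimal solution is supported on at most $d$ points before the exchange machinery applies.
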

% We can use Prop.~\ref{Prop:Swap} and exchange points from the support set 
% $S$ of $\By$ with points in $X$ which are closer to $\By$ in order to 
% minimize $\|\Bc\|_1. 

% In order to minimize $\|\Bc_j\|_1$, 
% the point $\Bx_{y,1}$ determined such that $\angle \left( \By, \Bx_{y,1} \right)$ is 
% minimal. 

% Since $\lim \cot = \pm \infty$ as its argument approaches $0$ and $\pi$, it follows from 
% Proposition~\ref{Prop:Noiseless.Dependency} that
% \begin{corollary}
%   The right-hand side of Eq.~\eqref{Eq:Noiseless.Limit.l1} assumes critical points 
% %function $f \left( \angle \left( \Bw_j, \Bw_{j,1} \right) \right)$ is at critical points 
%   iff
%   \begin{equation}
%     f \left( \angle \left( \By, \Bx_{y,1} \right) \right)
%     =
%     \left\{ \begin{array}{c} \gamma \\ \pi -\gamma \end{array} \right.
%   \end{equation}
%   and 
%   \begin{equation}
%     \gamma^2 = 1 - \left( \frac{\By^\top \Bnu}{\eta} \right)^2.
%   \end{equation}
% \end{corollary}

The implication %of Proposition~\ref{Prop:Noiseless.Dependency} 
is as 
follows: The usual assumption is that the data uniformly distributed. 
In the context of sparse subspace clustering, this implies that data 
on the same subspace need be uniformly distributed 
(cf.~\cite{Candes2012:SSC}). If this condition is violated, for instance
because points fall into two well separated clusters, the support of the
points of one cluster will not include points of the other.

This does not change the results in~\cite{Candes2012:SSC} since the
central Theorem~2.5 there rests upon
the assumption that points on the same subspace are more or less evenly
distributed. The idea in this work considers a particular violation of 
this assumption.

\subsection{Robust $l_1$-Norm: $\lambda > 0$}
Define $\lambda_1$ and $\lambda_2$ such that $\lambda_1 + \lambda_2 =
\lambda$. The idea here is to define an auxiliary line parallel to
$\mbox{span} \begin{bmatrix} \Bx_{y,1} \end{bmatrix}$ with distance
$\lambda_1$ into direction of $\By$, and an auxiliary subspace parallel to
$\mbox{span} \begin{bmatrix} \Bx_{y,2} & \cdots & \end{bmatrix}$ with distance
$\lambda_2$ also into direction of $\By$. The geometry of this configuration
is shown in the middle of Fig.~\ref{Fig:Geometry}. 

\begin{proposition}
The $l_1$-norm of $\Bc$ increases with the angle 
$\angle \left( \By, g\left( \Bx_{y,1},\lambda_1 \right) \right)$ between $\By$
and $g\left( \Bx_{y,1}, \lambda_1 \right)$
  \begin{multline}
  %\begin{align}
%    \| c \|_1 \geq f \left( \angle \left( \Bw_j, \Bw_{j,1} \right) \right) 
%    \| c \|_1 \geq \cot \left( \cos^{-1} \frac{\left(c_{j,1}\Bw_{j,1} - \Bw_j\right)^\top \Bw_j}{\left\|c_{j,1}\Bw_{j,1} - \Bw_j\right\|_2} + \rho - \frac{\pi}{2} \right) \| \Bnu \|_2
    \| \Bc \|_1 \geq h_2\left( \eta, \lambda_2 \right) \cdot \\ %\nonumber
    \cot \left(f \left( \angle \left( \By, g\left( \Bx_{y,1}, \lambda_1 \right) \right) \right)  + h_1\left( \rho,\lambda_2 \right) - \frac{\pi}{2} \right).
    \label{Eq:Noisy.Limit.l1}
  \end{multline}
  %\end{align}
  % where
  % \begin{equation}
  %   f \left( \angle \left( \Bw_j, \Bw_{j,1} \right) \right) = 
  %   \cos^{-1} \frac{\left(c_{j,1}\Bw_{j,1} - \Bw_j\right)^\top \Bw_j}{\left\|c_{j,1}\Bw_{j,1} - \Bw_j\right\|_2}
  % \end{equation}
  \label{Prop:Noisy.Dependency}
\end{proposition}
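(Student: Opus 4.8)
The plan is to reduce the noisy case to the noiseless bound of Proposition~\ref{Prop:Noiseless.Dependency} by replacing the exact reconstruction target $\By$ with a shifted target dictated by the slack $\lambda$. First I would fix the same orthogonal configuration used in the noiseless proof: choose the support $\Bx_{y,2},\ldots$ so that $\mbox{span}(Q) \perp \mbox{span}\{\By,\Bx_{y,1}\}$, so that the two relaxation directions — one along $\mbox{span}(\Bx_{y,1})$ and one along $\mbox{span}(Q)$ — are mutually orthogonal. Under this orthogonality the squared residual splits additively, which is what justifies writing the error budget as $\lambda_1+\lambda_2=\lambda$ and interpreting $\lambda_1$ as the permissible offset of the reconstruction from the line $\mbox{span}(\Bx_{y,1})$ and $\lambda_2$ as the offset from $\mbox{span}(Q)$.

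Next I would introduce the two auxiliary affine sets explicitly: the line $g(\Bx_{y,1},\lambda_1)$ parallel to $\mbox{span}(\Bx_{y,1})$ shifted toward $\By$ by $\lambda_1$, and the affine subspace parallel to $\mbox{span}(Q)$ shifted toward $\By$ by $\lambda_2$. Because the coefficient $\Bc(1)$ now only has to reach this auxiliary line rather than $\By$ itself, the effective opening angle becomes $\angle(\By, g(\Bx_{y,1},\lambda_1))$, and I would define $f$ on this shifted angle exactly as the unit-vector angle $\cos^{-1}\Bv_1$ was defined before. The $\lambda_2$-offset of the subspace then both shortens and tilts the orthogonal residual $\Bnu=(I-P_Q)\By$; I would encode these two effects as $h_2(\eta,\lambda_2)$, the shortened residual magnitude playing the role that $\eta$ played in the noiseless proof, and $h_1(\rho,\lambda_2)$, the corrected angle playing the role of $\rho$.

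With these substitutions the argument becomes formally identical to the noiseless one. I would re-derive the cotangent identity $\|\Bx-P_Q\By\|_2=\cot\theta'\cdot h_2(\eta,\lambda_2)$ for the modified angle $\theta'=f(\angle(\By,g(\Bx_{y,1},\lambda_1)))+h_1(\rho,\lambda_2)-\frac{\pi}{2}$, reading the right-triangle relations off the middle panel of Fig.~\ref{Fig:Geometry}, and then conclude via the norm inequality $\|\Bc\|_1\ge\|\Bc\|_2$ (as in the noiseless proof) to obtain Eq.~\eqref{Eq:Noisy.Limit.l1}.

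The main obstacle is making the additive split $\lambda_1+\lambda_2=\lambda$ rigorous and pinning down the closed forms of $g$, $h_1$, and $h_2$. The constraint in Eq.~\eqref{Eq:L1Optim} bounds the \emph{squared} residual $\|X_{-y}\Bc-\By\|_2^2\le\lambda$, so I must verify that under the orthogonal configuration above the two perpendicular offsets contribute additively to this squared norm, and that the three functions follow from the resulting triangle geometry. Establishing that the minimizing $\Bc$ indeed spends the whole $\lambda_1$ budget on the line and the whole $\lambda_2$ budget on the subspace — rather than any other allocation of slack — is the step most likely to require care, since it is precisely this extremal allocation that converts the geometric picture into the stated lower bound.
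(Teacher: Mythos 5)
Your proposal follows essentially the same route as the paper's own proof: split the error budget as $\lambda = \lambda_1 + \lambda_2$, use $g$ to build the auxiliary affine line parallel to $\mbox{span}(\Bx_{y,1})$ and a parallel shift of $\mbox{span}(Q)$, let $h_1$ and $h_2$ correct the angle $\rho$ and the residual length $\eta$, and then rerun the cotangent argument of Proposition~\ref{Prop:Noiseless.Dependency}. If anything, you are more explicit than the paper, which leaves the additive split of the (squared) residual and the extremal allocation of slack unaddressed exactly as you flag.
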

\begin{proof}
The idea for the robust $l_1$-norm estimator is to create auxiliary variables 
$\lambda_1$ and $\lambda_2$ so that $\lambda = \lambda_1 + \lambda_2$. These two 
are used to define two affine spaces parallel to $\mbox{span}(\Bx_{y,1})$ and 
$\mbox{span}(Q)$ with distances $\lambda_1$ and $\lambda_2$ such that the 
distances to $\By$ are reduced by $\lambda_1$ and $\lambda_2$. 
% The affine space 
% parallel to $\mbox{span}(\Bw_{j,1})$ is defined in the subspace 
% $\mbox{span} \begin{bmatrix} \Bw_j & \Bw_{j,1} \end{bmatrix}$, whereas the subspace 
% parallel to $\mbox{span}(Q)$ is placed in the 
% -> guaranteed by construction
The function $g$ constructs the affine space parallel to $\mbox{span}(\Bx_{y,1})$. 
The functions $h_1$ and $h_2$ modify the projection of $\By$ not onto 
$\mbox{span}(Q)$ but its parallel affine space, and correct $\rho$, respectively. 
\end{proof}
%If $\lambda_1$ and $\lambda_2$ are sufficiently large, critical points cannot be obtained anymore.

\subsection{Lasso}
The configuration for the lasso estimator
\begin{equation}
  \min \;\| \Bc \|_1 + \lambda \cdot \| X_{-y} \Bc - \By \|_2
\label{Eq:Lasso}
\end{equation}
is very similar to the one shown for the robust $l_1$-norm and is shown 
in the right plot of Fig.~\ref{Fig:Geometry}.   
We can now define a variable $\epsilon = \epsilon_1 + \epsilon_2$ that it 
absorbs the $l_2$-error. 
%\begin{equation}
%  \| \Bc_j \|_1 + \lambda \epsilon
%\label{Eq:Lasso}
%\end{equation}
Here $\epsilon_1$ 
indicates the distance of an auxiliary line parallel to
$\mbox{span} \begin{bmatrix} \By \end{bmatrix}$ with distance $\epsilon_1$ and
an auxiliary subspace parallel to $\mbox{span} \begin{bmatrix} \Bx_{y,2} &
  \cdots & \end{bmatrix}$ with distance $\epsilon_2$. The difference to the
robust $l_1$-norm is that $\epsilon$ is now variable.

Due to the close similarity between the two estimators, 
Proposition~\ref{Prop:Noisy.Dependency} can easily be adapted to the lasso estimator:
%applies to the lasso estimator if $\epsilon_1$ 
%and $\epsilon_2$ are subsituted for $\lambda_1$ and $\lambda_2$.
\begin{proposition}
The $l_1$-norm of $\Bc$ is increases with the angle 
$\angle \left( \By, g\left( \Bx_{y,1},\lambda_1 \right) \right)$ between $\By$
and $g\left( \Bx_{y,1}, \lambda_1 \right)$
  \begin{multline}
%    \| c \|_1 \geq f \left( \angle \left( \Bw_j, \Bw_{j,1} \right) \right) 
%    \| c \|_1 \geq \cot \left( \cos^{-1} \frac{\left(c_{j,1}\Bw_{j,1} - \Bw_j\right)^\top \Bw_j}{\left\|c_{j,1}\Bw_{j,1} - \Bw_j\right\|_2} + \rho - \frac{\pi}{2} \right) \| \Bnu \|_2
    \| \Bc \|_1 \geq h_2\left( \eta, \epsilon_2 \right) \cdot \\
    \cot \left(f \left( \angle \left( \By, g\left( \Bx_{y,1}, \epsilon_1 \right) \right) \right)  + h_1\left( \rho,\epsilon_2 \right) - \frac{\pi}{2} \right).
    \label{Eq:Lasso.Limit.l1}
  \end{multline}
  % where
  % \begin{equation}
  %   f \left( \angle \left( \Bw_j, \Bw_{j,1} \right) \right) = 
  %   \cos^{-1} \frac{\left(c_{j,1}\Bw_{j,1} - \Bw_j\right)^\top \Bw_j}{\left\|c_{j,1}\Bw_{j,1} - \Bw_j\right\|_2}
  % \end{equation}
  \label{Prop:Lasso.Dependency}
\end{proposition}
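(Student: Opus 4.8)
The plan is to reduce the lasso problem of Eq.~\eqref{Eq:Lasso} to the robust $l_1$-norm setting already treated in Proposition~\ref{Prop:Noisy.Dependency}, exploiting the standard equivalence between the penalized and the constrained form of a convex program. Once that reduction is in place, the geometry is identical and the bound follows by a direct substitution of symbols.

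First I would fix the residual attained at the optimum. Writing $\epsilon = \| X_{-y}\Bc - \By \|_2$ for the value realized by the lasso minimizer, the KKT conditions of the convex objective in Eq.~\eqref{Eq:Lasso} show that the same $\Bc$ minimizes $\| \Bc \|_1$ subject to the hard constraint $\| X_{-y}\Bc - \By \|_2 \leq \epsilon$. Thus, after $\epsilon$ has been pinned to its optimal value, the lasso and the robust $l_1$-norm problems share their solution, with $\epsilon$ taking over exactly the role that $\lambda$ played in Proposition~\ref{Prop:Noisy.Dependency}.

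Next I would split $\epsilon = \epsilon_1 + \epsilon_2$ and carry the geometric construction of the preceding proof over verbatim, as suggested by the right plot of Fig.~\ref{Fig:Geometry}. Here $\epsilon_1$ defines an auxiliary line parallel to $\mbox{span}(\Bx_{y,1})$ and $\epsilon_2$ an auxiliary subspace parallel to $\mbox{span}(Q)$, each shifted a distance $\epsilon_i$ towards $\By$; the map $g$ returns this shifted line while $h_1$ and $h_2$ adjust the projection of $\By$ onto the shifted subspace and correct the angle $\rho$, exactly as before but with $\lambda_i$ replaced by $\epsilon_i$. Inserting these quantities into the inequality of Proposition~\ref{Prop:Noisy.Dependency} then yields Eq.~\eqref{Eq:Lasso.Limit.l1}.

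The one genuinely new ingredient, and the main obstacle, is the variability of $\epsilon$: in the robust case $\lambda$ is a user-specified constant, whereas here the residual is an output of the optimization, so the reduction is not automatic. I must argue that a well-defined optimal residual $\epsilon$ exists and that the constrained problem at that level returns the same minimizer as the lasso, so that the two problems share the identical value of $\| \Bc \|_1$. This follows from the convexity of the penalty term in Eq.~\eqref{Eq:Lasso} together with the monotone correspondence between the trade-off weight $\lambda$ and the induced residual; it is the step that warrants care, while everything downstream is an immediate restatement of the robust $l_1$-norm argument.
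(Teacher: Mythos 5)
Your proposal takes essentially the same route as the paper: the paper's own proof simply states that the argument is identical to that of Proposition~\ref{Prop:Noisy.Dependency} with the admissible error $\epsilon = \epsilon_1 + \epsilon_2$ now variable, which is precisely the reduction you carry out. You in fact supply more justification than the paper does, by invoking the standard equivalence between the penalized and constrained forms to pin the residual $\epsilon$ to its optimal value before substituting into the robust $l_1$-norm bound.
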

\begin{proof}
The proof of Proposition~\ref{Prop:Lasso.Dependency} is identical as the one of 
Prop.~\ref{Prop:Noisy.Dependency} except that the admissible error 
$\epsilon = \epsilon_1 + \epsilon_2$ is now variable.
\end{proof}

% To demonstrate the claimed disconnectivity for the lasso estimator, we performed a 
% similar experiment as in Sec.~\ref{Sec:Sub:Noiseless} but added a small noise. 
% The between-cluster connectivity for each configuration can be seen in the right 
% plot of Fig.~\ref{Fig:connectivity}. As claimed, the lasso also estimates low 
% connectivity between two clusters on the same subspace.

% \begin{figure}[t]
%     \mbox{\includegraphics[width=0.475\textwidth]{./Grafiken/connectivity_lasso.jpg}}
%   \caption{The connectivity defined in Eq.~\eqref{Eq:Connectivity} between two
%     clusters for the lasso estimator. The $3$-dimensional points of the clusters on 
%     the surface of a sphere. The angle between the vectors to the cluster centers 
%     is varied between $[0^\circ,\,180^\circ]$.}
% \label{Fig:Connectivity.Lasso}
% \end{figure}

%%%%%%%%%%%%%%%%%%%%%%%%%%%%%%%%%%%%%%%%%%%%%%%%%%%%%%%%%%%%%%%%%%%%%%%
%%%%%%%%%%%%%%%%%%%%%%%%%%%%%%%%%%%%%%%%%%%%%%%%%%%%%%%%%%%%%%%%%%%%%%%
%%%%%%%%%%%%%%%%%%%%%%%%%%%%%%%%%%%%%%%%%%%%%%%%%%%%%%%%%%%%%%%%%%%%%%%
%%%%%%%%%%%%%%%%%%%%%%%%%%%%%%%%%%%%%%%%%%%%%%%%%%%%%%%%%%%%%%%%%%%%%%%

%%%%%%%%%%%%%%%%%%%%%%%%%%%%%%%%%%%%%%%%%%%%%%%%%%%%%%%%%%%%%%%%%%%%%%%
%%%%%%%%%%%%%%%%%%%%%%%%%%%%%%%%%%%%%%%%%%%%%%%%%%%%%%%%%%%%%%%%%%%%%%%
%%%%%%%%%%%%%%%%%%%%%%%%%%%%%%%%%%%%%%%%%%%%%%%%%%%%%%%%%%%%%%%%%%%%%%%
%%%%%%%%%%%%%%%%%%%%%%%%%%%%%%%%%%%%%%%%%%%%%%%%%%%%%%%%%%%%%%%%%%%%%%%

\section{Selective Pursuit}
\label{Sec:Algo}

% As shown in Sec.~\ref{Sec:Connectivity}, the graph constructed using 
% sparse coefficients as measure of similarity can consist of 
% disconnected subgraphs, even if the corresponding points are on the same
% subspace. This occurs if points on a particular subspace are not 
% uniformly distributed, but fall into at least two separate clusters. 

% The algorithm we propose here is based on the Orthogonal Matching Pursuit 
% (OMP) algorithm~\cite{Pati1993:OMP,Davenport2010:OMP_Analysis}. % summarized 
% in Algorithm~\ref{Algo:OMP}. 
% The OMP algorithm can be used to estimate a sparse support set of $\Bw_j$. 
% As the other $l_1$-norm estimators, it is susceptible to clustered points 
% in which case the graph resulting from the supports would be disconnected. 

% \begin{algorithm}
%   \KwData{$W$, $\Bw_j$, threshold $T$}
%   \emph{initialize} $\Br^l = \Bw_j$, $\Bc_j^l = 0$, $\mbox{supp}^l(\Bw_j) = \emptyset$

%   \While{$\|\Br^l\|_2 > T$}{
%     \emph{match} $\Bm^{l+1} = W\Br^l$

%     \emph{select} $\mbox{supp}^{l+1}(\Bw_j) = \mbox{supp}^{l}(\Bw_j) \cup \left\{ i \, |\, \mbox{argmax}_{i} \, \Bm^{l+1} \right\}$

%     $\Bc_j^{l+1} = \mbox{argmin}_{\Bv \, : \, \mbox{supp}(\Bv) \subseteq \mbox{supp}^{l+1}(\Bw_j)} \|W\Bv -\Bw_j\|_2$

%     $\Br^{l+1} = W\Bc_j^{l+1} - \Bw_j$

%     $l = l + 1$
%   }
% \caption{Orthogonal Matching Pursuit (OMP) algorithm.}
% \label{Algo:OMP}
% \end{algorithm}

\subsection{Selective Dantzig Selector}

Inspired by the Dantzig selector~\cite{Candes2007:Dantzig,Qu2015:RobustDantzig}, 
%\begin{equation}
%  \min \; \| \Bc_j \|_1 \; \mbox{s.t.} \; \left\| W^\star \left(W \Bc_j - \Bw_j \right) \right\|_{\infty} \leq \lambda \;\; \mbox{and} \; c_{jj} = 0
%\end{equation}
$X^\star = X^\top$, we notice that the product $X^\star X$ measures 
correlations to the subspace if we define a modified Dantzig selector 
\begin{equation}
  X^\star = X_{S_{y}}^\top.
\end{equation}
The matrix $X_{S_{y}}$ consists of those columns of $X$ that are the 
support of $\By$. Since points on the same subspace have smaller 
distances to it than points from different subspaces, we can 
expect that the products $X^\star \Bx_j$, $j = 1,\ldots,N \; \setminus \,
\{\CalI(\By),\, \mbox{supp}(\By)\}$ to be large if $j \in T_{y}$. 

The idea proposed here is as follows: The support set $S_{y}$ of $\By$
usually consists of points from the set of neighbors of $\By$. Rather than 
stopping at this point, it is possible to select additional points,
if their coherences with $\mbox{span}(X_{S_{y}})$ is are large. 

Let $S^e_{y}$ be the extended support set of $\By$ which consists of 
the original support set, and the additionally selected points. If 
the modified Dantzig selector is now taken to be $X^\star = X_{S^e_{y}}^\top$, 
the selection process becomes more and more influenced by noise. Because 
noise causes spurious singular values in $X^\star$, we reduce their effect 
by scaling with 
\begin{equation}
  \rho = \mbox{trace}(X^\star)^\top X^\star.
\end{equation} 
Since the trace of the square of a  matrix equals the sum of its singular 
values, this reduces the effect of the spurious singular values. Finally, 
new extended support vectors are chosen if for a threshold $\delta$
\begin{equation}
  \argmax \limits_{j} \frac{\| X^\star \Bx_j \|_2^2 }{\rho} > \delta, \quad j \in \{1,\ldots N\}
  \setminus \{\CalI(\By),\, S^e_{y} \}.
\end{equation}

\begin{figure*}[t]
    \mbox{\includegraphics[width=0.3\textwidth]{./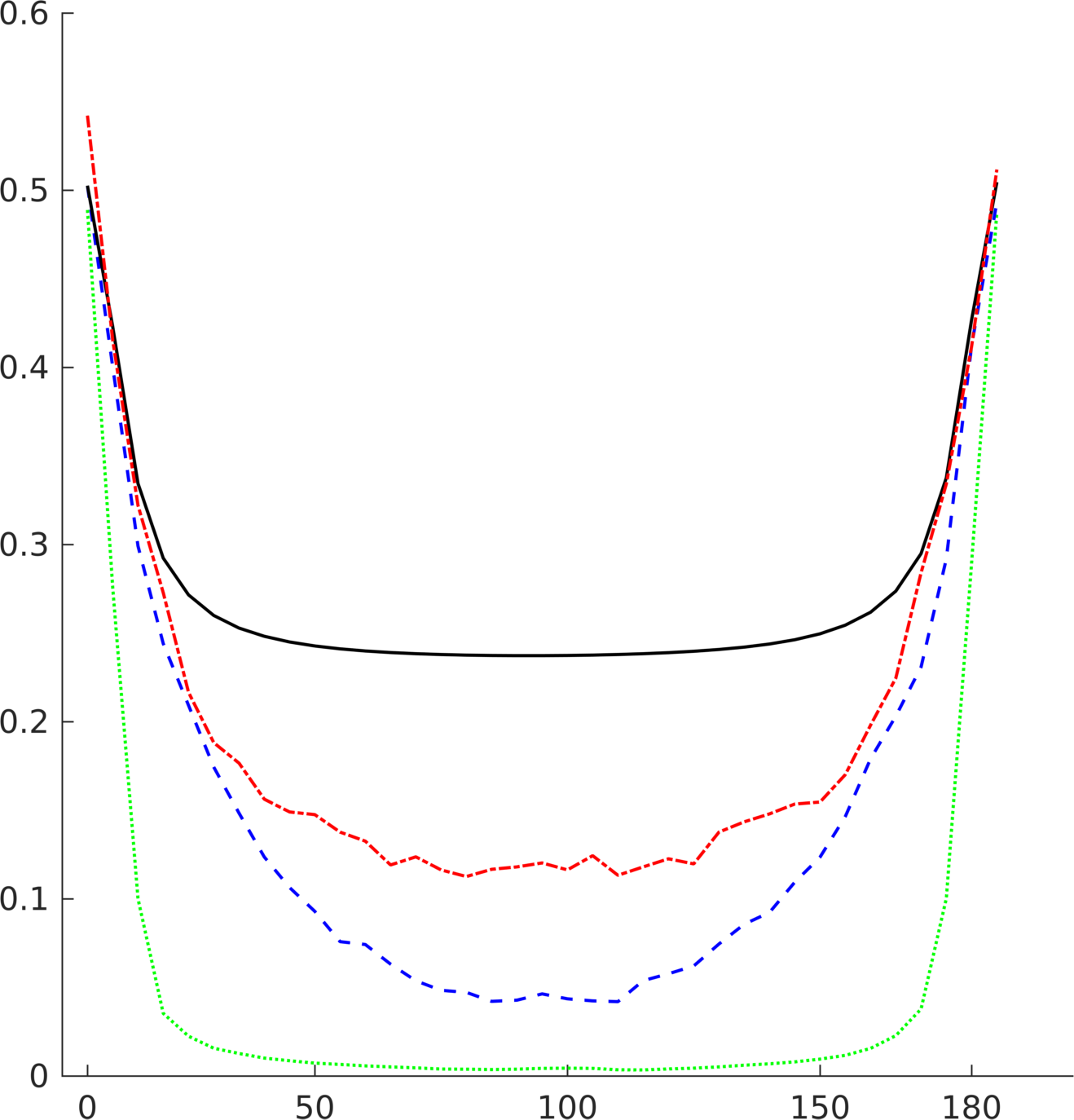}}
    \mbox{\includegraphics[width=0.3\textwidth]{./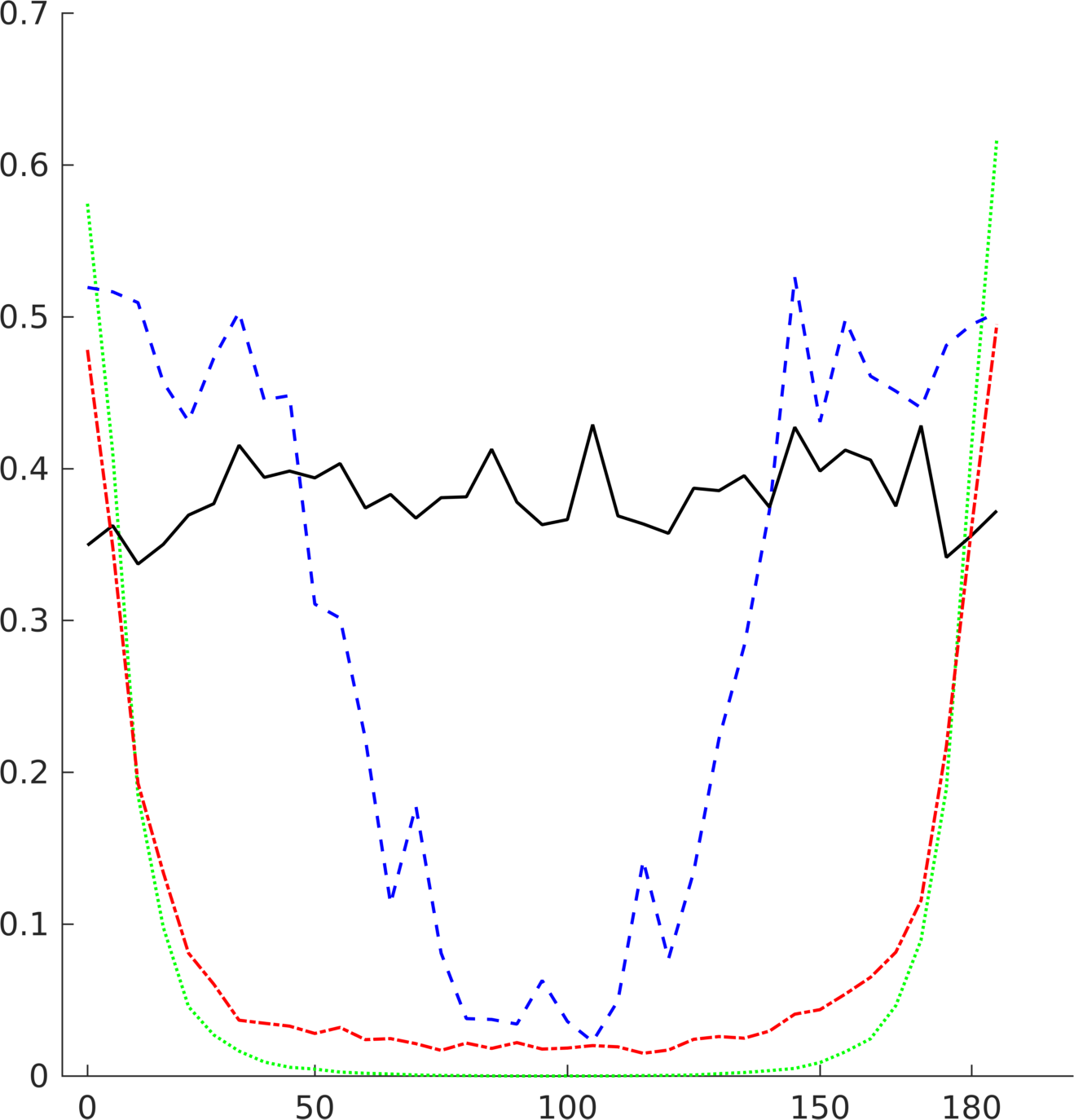}}
    \mbox{\includegraphics[width=0.3\textwidth]{./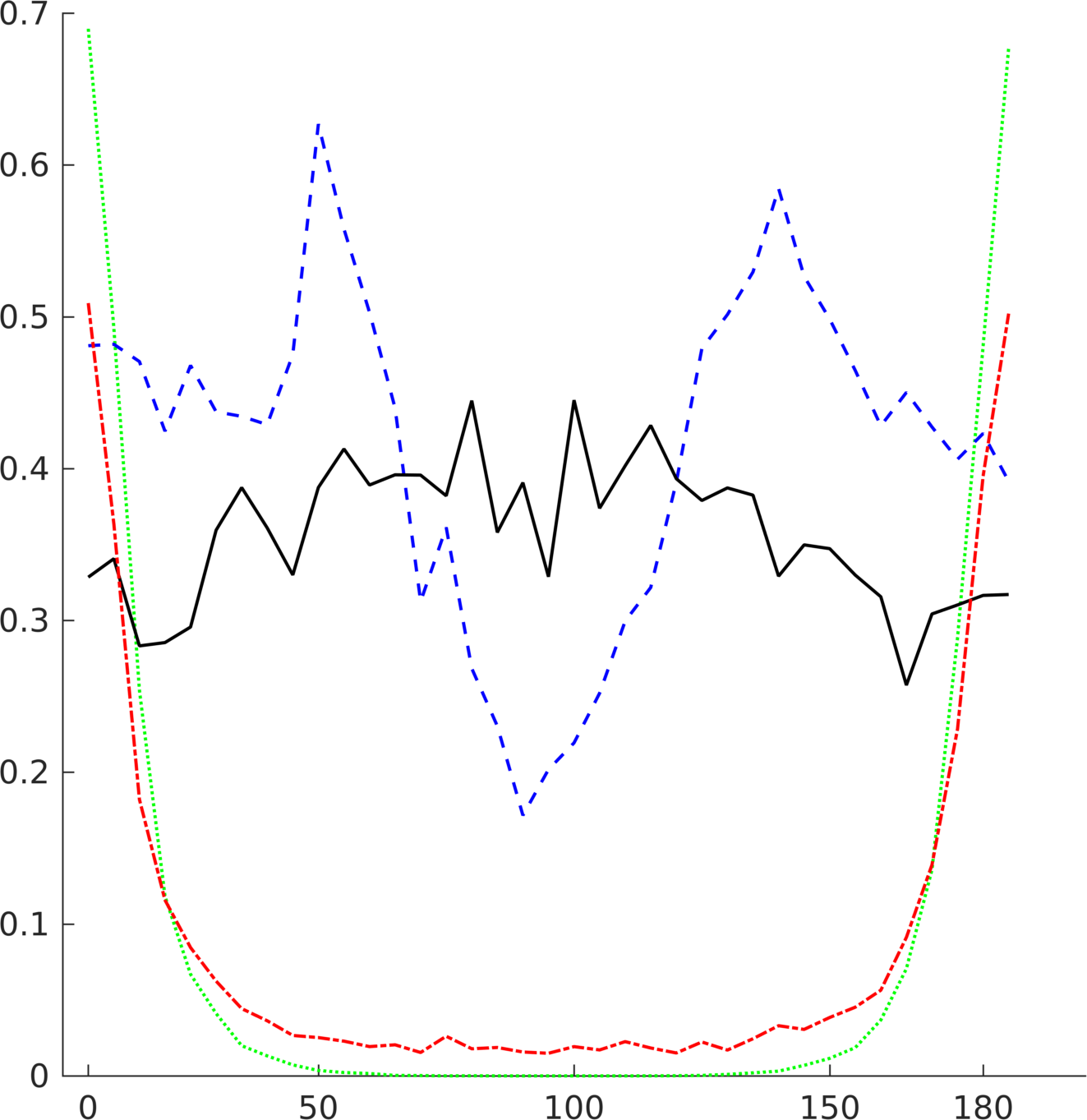}}
  \caption{Relative connectivities between the two clusters. The $x$-axes
    correspond to angles between the clusters (cf. the explanation in
    Sec.~\ref{Sec:Exps}), the $y$-axes to the relative connectivity $\xi$
    defined by Eq.~\eqref{Eq:xi}. Left: no noise;
    middle: noise with $\sigma = 1\%$; right: noise with $\sigma = 2\%$. The
    dotted green line corresponds to the lasso-method, the dash-dotted red
    line to the variational l0-norm
    optimization~\cite{Logsdon2012:L0Variational}, the dashed blue line to the
    proposed Selective Dantzig Selector, and the solid black line to the
    proposed Subspace Selector.}
\label{Fig:Res}
\end{figure*}

\subsection{Subspace Selector}

To further reduce the effect of spurious singular values, it is possible 
to fit an $d^{(l)} \approx \mbox{length}(S_{y})$ dimensional orthonormal 
basis $B$ to $X^\star$
\begin{equation}
B = \argmax \sum
 \|B X^\star_j\|_2^2, \quad j=1,\ldots,\mbox{length}(S^e_{y}).
\end{equation}

We can now choose additional points based on the Euclidean distance to 
the subspace $\mbox{span}(B)$ by 
\begin{equation}
  \argmax \limits_{j} \| P_B \Bx_j \|_2^2  > \delta, \quad j \in \{1,\ldots N\}
  \setminus \{\CalI(\By),\, S^e_{y} \}.
\end{equation}
for a threshold $\delta$. Here, $P_B$ is the orthogonal projector onto 
$\mbox{span}(B)$.

%%%%%%%%%%%%%%%%%%%%%%%%%%%%%%%%%%%%%%%%%%%%%%%%%%%%%%%%%%%%%%%%%%%%%%%
%%%%%%%%%%%%%%%%%%%%%%%%%%%%%%%%%%%%%%%%%%%%%%%%%%%%%%%%%%%%%%%%%%%%%%%
%%%%%%%%%%%%%%%%%%%%%%%%%%%%%%%%%%%%%%%%%%%%%%%%%%%%%%%%%%%%%%%%%%%%%%%
%%%%%%%%%%%%%%%%%%%%%%%%%%%%%%%%%%%%%%%%%%%%%%%%%%%%%%%%%%%%%%%%%%%%%%%

\section{Experiments}
\label{Sec:Exps}

%\subsection{Connectivity between Clusters}
%\label{Sec:MI}

To experimentally demonstrate the effect of disconnectivity between 
clusters of points on the same subspace, we created two normally 
distributed point clouds on the unit sphere  in $\RR^3$ with 
identical mean and standard deviation. Each clusters consists of $20$ 
points. One was then rotated in steps of $5^\circ$ from $0^\circ$ to
$180^\circ$. Afterwards, both point sets were multiplied by the same 
orthonormal $20 \times 3$ basis matrix. Each point was then normalized 
to length $1$.

For each angle between the clusters, normally distributed noise was 
added to the data. We used standard deviations of 
$\sigma = \{0,0.02,0.03\}$ which amounts to $0\%$, $2\%$, and $3\%$ 
noise. For each combination and noise magnitude, $10$ trials were 
performed, i.e. the data was perturbed $10$ times with different 
random noise. 

To measure the connectivity between the two point sets, we analyzed 
the affinity matrix $A = |C|+|C^\top|$
\begin{equation}
  A = 
  \begin{bmatrix}
    A_{11} & A_{12} \\
    A_{21} & A_{22} 
  \end{bmatrix}.
\end{equation}
The coefficients in the  $A_{11}$ and $A_{22}$ blocks indicate the 
connectivity within each cluster whereas the coefficients in 
$A_{21}=A_{12}^\top$ indicate the connectivity between the different 
clusters.

We can therefore measure the connectivity $\xi$ between the clusters by 
\begin{equation}
  \xi = \frac{1}{N} \sum \limits_{i=1}^{N} \frac{\sum \limits_{j=1}^{N_2}
    A_{12}(i,j)}{\sum \limits_{j=1}^{N} A(i,j)}
  % \frac{ \|A_{12}\|_F }{ \| A_{11} \|_F + \| A_{22} \|_F }.
  \label{Eq:xi}
\end{equation}

The two algorithms proposed in Sec.~\ref{Sec:Algo} are compared against 
a lasso-estimator
\begin{equation}
\|\Bc_c\|_1 + \lambda \|Y_{-j}\Bc_c - \By\|_2
\label{Eq:lasso}
\end{equation}
which is optimized using CVX~\cite{CVX}. We used only Eq.~\eqref{Eq:lasso} 
because the results of Eq.~\eqref{Eq:L1Optim} are almost identical. Further, 
an algorithm to estimate the $l0$-norm using a variational 
approach~\cite{Logsdon2012:L0Variational} was included into the comparison. 
This is motivated by the idea that the $l0$-norm is not supposed to be 
effected by the bias that the $l1$-norm estimators have.

The results are shown in the plots in Fig.~\ref{Fig:Res}. The left plot 
corresponds to no noise, the middle one to a medium noise ($\sigma=1\%$), 
and the right plot to strong noise ($\sigma = 2\%$). The $x$-axes
correspond to angles between the clusters (cf. the explanation in
Sec.~\ref{Sec:Exps}), the $y$-axes to the relative connectivity $\xi$
defined by Eq.~\eqref{Eq:xi}. The dotted green line indicates the 
lasso-method~\eqref{Eq:lasso}, the dash-dotted red line the l0-norm 
optimization, the dashed blue line the proposed Selective Dantzig 
Selector, and the solid black line the proposed Subspace Selector.

As can be seen, the two proposed algorithms do not suffer as much from 
the gap between the two clusters as the other algorithms. Very surprisingly, 
the $l0$-norm estimator is also strongly affected.

Average affinity matrices computed by three different methods are shown in 
Fig.~\ref{Fig:CMats}. Apparently, for the lasso estimator the two clusters 
cause almost disconnected subgraphs at an angle of $45^\circ$ (left column 
in Fig.~\ref{Fig:CMats}).

\begin{figure*}[t]
    \subfigure{
      \begin{minipage}{0.045\textwidth}
      	{(a)}
      \end{minipage}
      \begin{minipage}{0.949\textwidth}
        \mbox{\includegraphics[width=0.3\textwidth]{./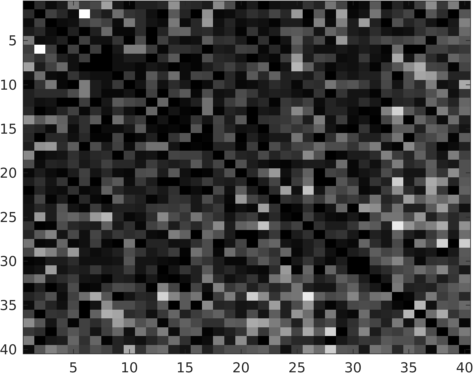}}
        \mbox{\includegraphics[width=0.3\textwidth]{./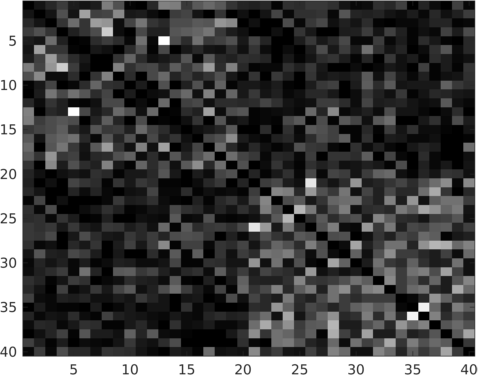}}
        \mbox{\includegraphics[width=0.3\textwidth]{./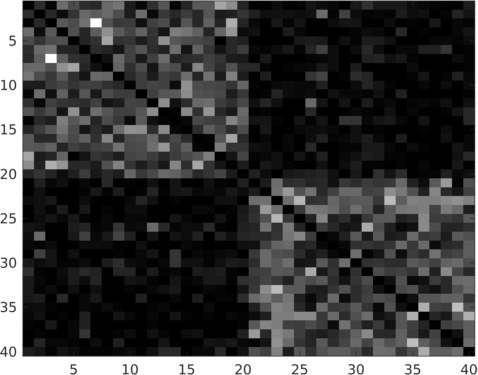}}
      \end{minipage}
    }
    \subfigure{
      \begin{minipage}{0.045\textwidth}
      	{(b)}
      \end{minipage}
      \begin{minipage}{0.949\textwidth}
        \mbox{\includegraphics[width=0.3\textwidth]{./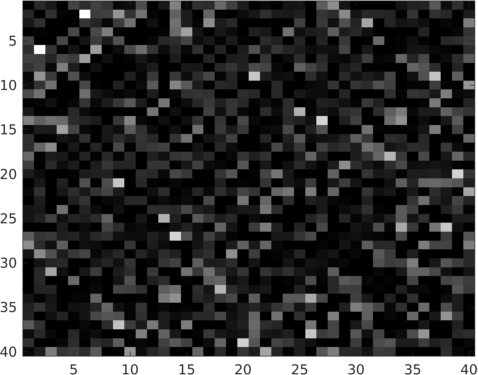}}
        \mbox{\includegraphics[width=0.3\textwidth]{./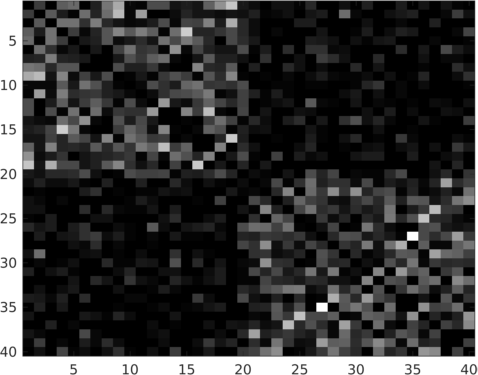}}
        \mbox{\includegraphics[width=0.3\textwidth]{./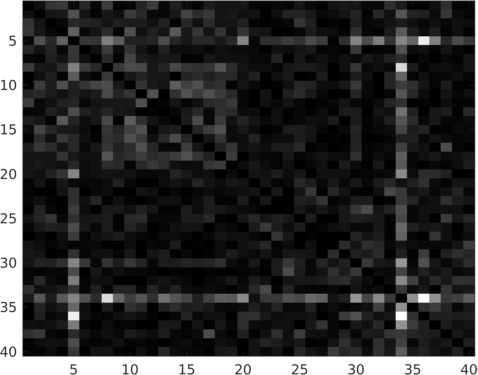}}
      \end{minipage}
    }
    \subfigure{
      \begin{minipage}{0.045\textwidth}
      	{(c)}
      \end{minipage}
      \begin{minipage}{0.949\textwidth}
        \mbox{\includegraphics[width=0.3\textwidth]{./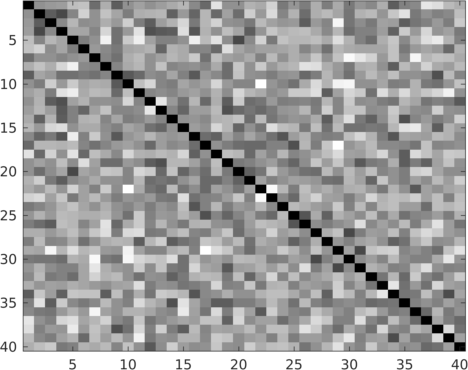}}
        \mbox{\includegraphics[width=0.3\textwidth]{./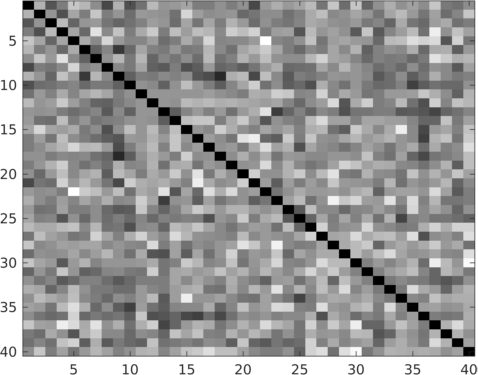}}
        \mbox{\includegraphics[width=0.3\textwidth]{./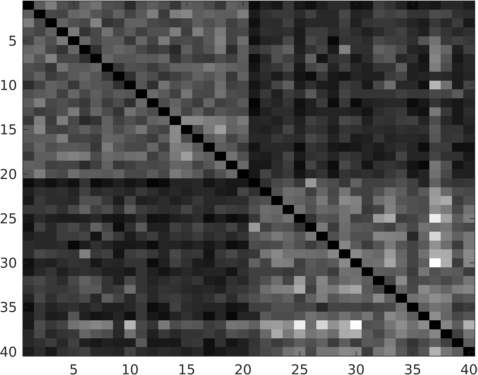}}
      \end{minipage}
    }
  \caption{Average affinity matrices $A=|C|+|C^\top|$ of the ten trials with
    their entries normalized to $[0,1]$. The left column are the average
    affinities if the angle between the clusters is $0^\circ$. The middle 
    column corresponds to angles of $10^\circ$, and the right one to 
    $45^\circ$. (a) average affinities computed by the lasso
    estimator~\eqref{Eq:lasso}. (b) affinities compute by the variational
    estimator of the $l0$-norm~\cite{Logsdon2012:L0Variational}. (c)
    affinities due to the proposed Selective Dantzig Selector.}
\label{Fig:CMats}
\end{figure*}

% To measure the connectivity between two clusters $\CalC_1$ and $\CalC_2$, 
% we use the mutual information
% \begin{equation}
% I \left( \CalC_1,\CalC_2 \right) = 
% \sum \limits_{c_i \in \CalC_1} \sum \limits_{c_j \in \CalC_2}
% p(c_1,c_2) \log \left( \frac{p(c_1,c_2)}{p(c_1)p(c_2)} \right)
% \label{Eq:MI}
% \end{equation}
% where $c_1$ and $c_2$ indicate vertices of cluster $\CalC_1$ and cluster
% $\CalC_2$, respectively. $p(\cdot)$ denotes the probability of $c_{\{1,2\}}$,
% and $p(\cdot,\cdot)$ the joint probability of $c_1$ and $c_2$. 

% The define $p(\cdot)$ and $p(\cdot,\cdot)$, we use the sparse coefficents, 
% $P=|C|+|C^\top|$ and normalize each row of $P$ to have sum $1$, i.e. 
% $p(c_{\{1,2\}}) = 1$. For the joint probabilities, we then take 
% $p(c_1,c_2) = P_{\CalI(c_1),\CalI(c_2)}$. Here, $\CalI(c_{\{1,2\}})$ indicates 
% the index of $c_1$ or $c_2$.

% In the following, we focus on the question how the connectvity between
% clusters on the same subspace influences the clustering. To do so, we 
% analyze connectivity in terms of mutual information sharred between two
% clusters. Since the connectivity between clusters on different subspaces has
% ben comprehensively answered in \cite{Candes2012:SSC}, we exclusively focus on
% the question of connectivity between clusters on the same subspace.

%\subsection{Evaluation}

\section{Summary and Conclusions}
\label{Sec:Conclusions}

The topic of this work is sparse subspace clustering. The principle 
idea behind this class of algorithms is to approximate each data 
point by a linear combination of as few other data points as possible. 
This problem is often approached by solving $l1$-norm problems. 
The sparse coefficients are used to define edge weights  
of a graph. Computing a minimum cut through this graphs reveals which 
points are located on the same subspace.

Since there are few edges per vertex, it could be possible that vertices 
of a graph form disconnected subgraph although the corresponding points 
lie on the same subspace. In a previous work, the possibility for this to
happen was answered negatively if the subspace dimension is $2$ or $3$. 
Four $4$-dimensional subspaces, an example was given when disconnectivity 
occurs. 

This work investigates the relaxed definition of not exact disconnectivity 
but \emph{relative} disconnectivity. This means that all edge weights 
between two subgraphs are so low that the minimum cut either separates 
the two subgraphs, or -- if the number of clusters is fixed -- estimates 
a very erroneous result.

It is shown that this problem is caused by a gap between the points 
corresponding to the vertices of the subgraphs. In other words, if 
points on the same subspace fall into to separate clusters, then 
the two subgraphs necessarily have a so \emph{low} connectivity that 
a successful clustering is no longer possible.

This problem does not invalidate the result published 
in~\cite{Candes2012:SSC}, since the authors explicitly state that their 
Theorem~2.5 requires well distributed points. The idea in this work is to
consider the case that this assumption is not given.

Two different algorithms are proposed which are not as susceptible to 
the shown effect. Surprisingly, even an estimator of the $l0$-norm 
suffers from biased distributions of points.

% \subsubsection*{Acknowledgments}

% Use unnumbered third level headings for the acknowledgments. All
% acknowledgments go at the end of the paper. Do not include
% acknowledgments in the anonymized submission, only in the final paper.

% \section*{References}

% References follow the acknowledgments. Use unnumbered first-level
% heading for the references. Any choice of citation style is acceptable
% as long as you are consistent. It is permissible to reduce the font
% size to \verb+small+ (9 point) when listing the references. {\bf
%   Remember that you can use a ninth page as long as it contains
%   \emph{only} cited references.}
% \medskip

%\small

{
\bibliography{aaai_2017}
\bibliographystyle{aaai}
}

% [1] Alexander, J.A.\ \& Mozer, M.C.\ (1995) Template-based algorithms
% for connectionist rule extraction. In G.\ Tesauro, D.S.\ Touretzky and
% T.K.\ Leen (eds.), {\it Advances in Neural Information Processing
%   Systems 7}, pp.\ 609--616. Cambridge, MA: MIT Press.

% [2] Bower, J.M.\ \& Beeman, D.\ (1995) {\it The Book of GENESIS:
%   Exploring Realistic Neural Models with the GEneral NEural SImulation
%   System.}  New York: TELOS/Springer--Verlag.

% [3] Hasselmo, M.E., Schnell, E.\ \& Barkai, E.\ (1995) Dynamics of
% learning and recall at excitatory recurrent synapses and cholinergic
% modulation in rat hippocampal region CA3. {\it Journal of
%   Neuroscience} {\bf 15}(7):5249-5262.

\end{document}